  \newtheorem{lemma}{Lemma}
\begin{document}

\title{Rectified Euler $k$-means and Beyond}

\author{Yunxia~Lin, ~Songcan~Chen
\IEEEcompsocitemizethanks{\IEEEcompsocthanksitem Yunxia~Lin~and~Songcan~Chen are with College of Computer Science and Technology/College of Artificial Intelligence, Nanjing University of Aeronautics and Astronautics, Nanjing, 211106, China and also with MIIT Key Laboratory of Pattern Analysis and Machine Intelligence. Corresponding author is Songcan Chen.
\protect\\ E-mail: \{linyx, s.chen\}@nuaa.edu.cn.}
\thanks{Manuscript received July 24, 2021.}}

\markboth{Journal of \LaTeX\ Class Files,~Vol.~14, No.~8, August~2015}%
{Shell \MakeLowercase{\textit{et al.}}: Euler Clustering}

\IEEEtitleabstractindextext{%
\begin{abstract}
Euler $k$-means (EulerK) first maps data onto the unit hyper-sphere surface of equi-dimensional space via a complex mapping which induces the robust Euler kernel and next employs the popular $k$-means. Consequently, besides enjoying the virtues of $k$-means such as simplicity and scalability to large data sets, EulerK is also robust to noises and outliers. Although so, the centroids captured by EulerK deviate from the unit hyper-sphere surface and thus in strict distributional sense, actually are outliers. This weird phenomenon also occurs in some generic kernel clustering methods. Intuitively, using such outlier-like centroids should not be quite reasonable but it is still seldom attended. To eliminate the deviation, we propose two \textbf{R}ectified \textbf{E}uler {$\bm{k}$}-means methods, i.e., REK1 and REK2, which retain the merits of EulerK while acquire real centroids residing on the mapped space to better characterize the data structures. Specifically, REK1 rectifies EulerK by imposing the constraint on the centroids while REK2 views each centroid as the mapped image from a pre-image in the original space and optimizes these pre-images in Euler kernel induced space. Undoubtedly, our proposed REKs can methodologically be extended to solve problems of such a category. Finally, the experiments validate the effectiveness of REK1 and REK2.
\end{abstract}

\begin{IEEEkeywords}
Kernel $k$-means, Euler kernel, pseudo centroid, rectified Euler $k$-means.
\end{IEEEkeywords}}

\maketitle

\IEEEdisplaynontitleabstractindextext
\IEEEpeerreviewmaketitle

\IEEEraisesectionheading{\section{Introduction}\label{sec:introduction}}
\IEEEPARstart{C}{lustering} forms a significant area of research effort {in unsupervised learning} and plays an indispensable role in data mining \cite{data mining1}\cite{data mining2}, pattern recognition, machine learning \cite{clusteringApp_machine learning1}\cite{clusteringApp_machine learning2} and image processing \cite{image processing}. It aims to partition objects into several clusters so that the objects in the same cluster are highly similar, whereas the data in different clusters are significantly different over certain similarity measurements \cite{what is clustering}.

So far, many clustering algorithms have been proposed and mainly cover several types: density-based method, hierarchical method, graph theoretic method, objective function based method, large margin clustering method \cite{clusteringApp_machine learning1}\cite{clustering category 1}-\cite{clustering category 3}. Among such large volume of clustering methods, $k$-means \cite{kmeans} belonging to the objective function based has been studied extensively \cite{kmeans-wide use} and is still one of the most popular clustering algorithms \cite{kmeans-popular}. $k$-means aims to partition the data points into a predefined number of clusters by minimizing the sum of square of Euclidean distance between the samples and the centroids. It enjoys simplicity, efficiency and low computational complexity $\mathcal{O}(Tndk)$ \cite{kmeansCom} where $n$ is the amount of samples, $T$ is the number of iterations, $k$ is the number of clusters. Besides it can be easily implemented by just a few lines of codes. Actually, the Euclidean measure in the objective function of $k$-means implicitly assumes that the data lie on isolated elliptical regions \cite{KmeansAssume}, thus it cannot partition clusters with complicated non-linear structures in input space. To solve this issue, kernelised versions of $k$-means as its most important extension are successively proposed \cite{kernel kmeans 1}-\cite{kernel kmeans 4} and have been widely studied due to its promising merits \cite{kernel clustering 1}\cite{kernel clustering 2}.

The generic kernel $k$-means methods first perform a non-linear transformation implicitly from the original space to a much higher or even infinite dimensional feature space (i.e., Reproducing Kernel Hilbert Space (RKHS)) where the data are more likely separable and next perform $k$-means on these mapped data, yielding so-called kernelized $k$-means. Because there are amount of different kernels, various kernel $k$-means methods are successively proposed, in which the well-known and commonly used ones are Radial Basis Function kernel (RBF) based $k$-means \cite{RBF kmeans} and spatial pyramid matching kernel based $k$-means \cite{SPM kmeans}. Such generic version indeed performs well on nonlinear scenarios, but it also sacrifices the simplicity and low computational complexity of the $k$-means due to needing to calculate and save the kernel matrix of size $n\times n$ whose cost is quadratic to the size of the dataset, consequently making it unsuitable to large-scale corpora. To solve this problem, some explicit-mapping-based kernel methods \cite{App1}-\cite{App5} have been proposed, which resort to various random feature mappings to approximate some specific nonlinear kernels involved and enjoy a linear time complexity in both the data size $n$ and the mapped dimension $D$. However, these methods either sacrifice some discriminative information or also suffer from a relatively high computational complexity when $D$ is large. Especially, they have to tune manually an important mapping dimension to guarantee their performances. Compared with the above two implicit and explicit categories, Euler $k$-means (EulerK) recently proposed stands out for its simplicity, elegance, uniqueness and ease of implementation\cite{EulerKmeans1}\cite{EulerKmeans2}. It marries their merits, i.e., utilizing an explicit equi-dimensional mapping corresponding to an exact kernel and solves their troubles. Specifically, it first maps $d$-dimensional data point $x$ onto an equi-dimensional RKHS space via an explicit complex transformation $f(x)=\frac{1}{\sqrt{2}}e^{i\alpha \pi x}$ where $\alpha$ is a hyperparameter and $i=\sqrt{-1}$ which {\emph{exactly}} induces the cosine-metric-based Mercer kernel, i.e., Euler kernel, and next performs the popular $k$-means on the mapped data, thus masterly avoiding of the calculation of large kernel matrix. Consequently, EulerK solves the nonlinear clustering problem that $k$-means cannot deal with well while perfectly inherits the virtues of $k$-means including simplicity, scalability to large datasets, low computational complexity linear in the data size and easy implementation. Moreover, EulerK yet enjoys robustness \cite{EulerKmeans1}\cite{EulerKmeans2} that both $k$-means and traditional kernel $k$-means do not have. Furthermore, the methodology to extend the traditional $k$-means can be naturally operated on EulerK to enhance and further improve the EulerK.

Albeit carrying on so many merits, as Figure \ref{Halfmoon1DEK} and \ref{Halfmoon2DEK} show, EulerK as a centroid-based kernel $k$-means actually acquires the centroids which deviate from the mapped space. In strict distributional sense, these centroids are actually outliers \cite{OutlierCite}. This {weird} phenomenon also {likewise} occurs in existing typical kernel clustering methods such as kernel $k$-means \cite{kernel kmeans 1}, kernel Fuzzy c-means (KFCM) \cite{KernelFCM}, kernel Self Organizing Map (KSOM) \cite{KSOM1}\cite{KSOM2},  Kernel Neural Gas \cite{KernelNeuralgas} based on Radial Basis Function kernel and Laplace kernel, Multiple Kernel $k$-means (MKKM) \cite{MultiKernelKmeans} and Kernelized Mahalanobis Distance based Fuzzy Clustering (KMD-FC) \cite{KMD-FC}. Obviously, using such outlier-like centroids to represent the data is not quite reasonable but {is indeed seldom concerned} so far.
\par{In this work, in order to demonstrate and analyse this phenomenon, we first define a criterion named \emph{Deviation Degree} $\kappa$ based on the distance between these centroids obtained by EulerK to the unit hyper-sphere surface the data reside on. Then motivated by this observation, we try to rectify the deviation and develop two rectified EulerKs, namely Rectified Euler $k$-means 1 (REK1) and Rectified Euler $k$-means 2 (REK2) respectively, which not only inherit the merits of EulerK but also acquire the centroids exactly residing on the support domain of data distribution. Specifically, REK1 builds the constraint on the centroids in the complex space, thus rectifying EulerK to assure the strict belonging of the obtained centroids to the mapped space. While REK2 views each centroid as the mapped image from a corresponding data point or pre-image in the original space and optimizes these pre-images in Euler kernel induced distance metric space. In summary, the contributions of this work can be summarized as follows:}
\begin{itemize}
\item We find that EulerK actually acquires outlier-like centroids which are possibly unsuitable to represent the distribution of the mapped data and define a criterion named \emph{Deviation Degree} $\kappa$ to quantitatively measure the deviation degree from these centroids to the support domain. {This weird phenomenon also occurs in some specific kernels beyond Euler kernel based clustering methods, such as Radial Basis Function kernel or Laplace kernel.}
\item We introduce two rectified methods named REK1 and REK2 to rectify the deviation which retain the desirable merits of EulerK while acquire the centroids residing on the support domain to better represent the data structures. Moreover, our proposed REK1 and REK2 can methodologically be extended straightforwardly to deal with problems of such a category {beyond Euler $k$-means}.
\item We carry out experiments on a synthetic dataset and several commonly used real datasets, and the experimental results validate the rationality and effectiveness of our proposed REK1 and REK2.
\item We also find an interesting phenomenon that the insensitive range to parameter $\alpha$ of EulerK on high-dimensional datasets is exactly opposite of that on low-dimensional ones, however this is not mentioned in EulerK. Moreover, the similar phenomenon also occurs in both REK1 and REK2.
\end{itemize}
\par{We compare REK1 and REK2 with EulerK in Table \ref{DisCen} which shows their formal expressions of distance metric and centroid, respectively.}

In the rest of this paper, Section 2 briefly overviews the preliminaries including $k$-means, kernel $k$-means, Euler kernel and Euler $k$-means. Section 3 details our proposed REK1 and REK2. Section 4 reports extensive experimental results and analysis. Finally, Section 5 concludes this paper with future research directions.

\begin{table*}[]
\caption{Distance and centroid of EulerK, REK1 and REK2.}
\centering
\begin{tabular}{c|c|c}

\hline
method      & distance between a mapped data point $f(x_j)$ and a centroid $\mathbf{m}_c=\mathbf{a}_c+i\mathbf{b}_c$ & centroid $\mathbf{m}_c$\\
\hline
EulerK      & $\frac{d}{2}+||\mathbf{m}_c||^2-\cos(\alpha \pi x_j)^T\mathbf{a}_c-\sin(\alpha \pi x_j)^T\mathbf{b}_c$        & $\sum_{j \in {\Omega_{c}}}{\phi(\mathbf{x}_{j,l})}$        \\
\hline
REK1        & $\frac{d}{2}+||\mathbf{m}_c||^2-\cos(\alpha \pi x_j)^T\mathbf{a}_c-\sin(\alpha \pi x_j)^T\mathbf{b}_c$         & $\frac{{\sum_{j \in {\Omega_{c}}}}{\phi(\mathbf{x}_{j,l})}}{\sqrt{(\sum_{{j \in {\Omega_{c}}}}\phi(\mathbf{x}^{i1}_{j,l}))^2+(\sum_{{j \in {\Omega_{c}}}}\phi(\mathbf{x}^{i2}_{j,l}))^2}}$        \\
\hline
REK2        & $\sum_{l=1}^{d}(1-\cos(\alpha\phi x_{j,l}-\mathbf{u}_{c,l}))$        & -$ \arccos(\frac{{\sum_{j \in {\Omega_{c}}}}{\cos(\alpha \pi x_{j,l})}}{A})$ (pre-image of $\mathbf{m}_c$)\\
\hline
\end{tabular}
\label{DisCen}
\begin{itemize}
\item $A = \sqrt{(\sum_{j \in \Omega_c}{\cos(\alpha\pi x_{j,l})})^2+(\sum_{j \in \Omega_c}{\sin(\alpha \pi x_{j,l})})^2}$.
\end{itemize}
\end{table*}

\section{PRELIMINARIES}
Before developing our proposed Rectified Euler $k$-means 1 and Rectified Euler $k$-means 2, we first briefly introduce preliminaries including $k$-means \cite{kmeans}, kernel $k$-means \cite{kernel kmeans 1}, Euler kernel \cite{EulerKernel} and Euler $k$-means \cite{EulerKmeans1}\cite{EulerKmeans2}.

\subsection{$k$-means}
Given a data set $\mathbf{X}=\{x_{1},\cdots,x_{n}\}$ where $x_{i}\in \mathcal{R}^d$ and $n$ is the number of samples, the objective of $k$-means is to minimize the sum-of-square loss over the cluster assignment matrix $\mathbf{P}\in \{0,1\}^{n\times k}$ and the cluster centroid matrix $\mathbf{M}\in \mathcal{R}^{n \times k}$, which can be formulated as the following optimization problem,
\begin{equation}
\begin{aligned}
  & \underset{\mathbf{P},\mathbf{M}}{\text{min}}
  & &\sum_{c=1}^{k}\sum_{j=1}^{n}\mathbf{P}_{jc}||x_{j}-\mathbf{m}_{c}||^2, \\
  & \text{s.t.}
  & & \mathbf{P}_{jc} \in \{0,1\}; \quad \sum_{c=1}^{k} \mathbf{P}_{jc}=1.
\end{aligned}
\label{kmeans}
\end{equation}
Here, $k$ is a predefined number of clusters and $\mathbf{m}_{c}$ is the centroid which is updated by the mean vector of data points in the $c$-th ($1\leq c \leq k$) group, i.e.,
\begin{equation}
\mathbf{m}_{c} = \frac{\sum_{j\in\Omega_{c}}x_{j}}{\sum_{j\in\Omega_{c}}\mathbf{P}_{jc}}.
\label{kmeans_M}
\end{equation}
Note $\Omega_{c}$ is a set of subscripts of samples belonging to the $c$-th partition. After obtaining the centroids, the cluster membership indicator matrix $\mathbf{P}$ is given as
\begin{equation} \mathbf{P}_{jc}=\left\{
\begin{aligned}
1, &  & if \quad  c = arg min \quad||\mathbf{x}_j-\mathbf{m}_{c}||^2; \\
0, &  & otherwise.
\end{aligned}
\right.
\label{kmeans_P}
\end{equation}
We can get the local optimum of (\ref{kmeans}) via iterative update between (\ref{kmeans_M}) and (\ref{kmeans_P}).
\par{$k$-means is simple, efficient and enjoys low computational complexity linear on the dataset size, but its corresponding objective function (\ref{kmeans}) actually assumes that the data lie on isolated elliptical regions \cite{KmeansAssume}, thus it is unsuitable to separate non-linearly separable data in the original space, limiting its popularity and feasibility in many practical applications.}

\subsection{Kernel $k$-means}
The generic kernel $k$-means methods firstly map data points in the original space into a much higher even infinite dimensional Reproducing Kernel Hilbert Space (RKHS) via an implicit mapping induced by a kernel function. They next cluster the mapped data points in such RKHS to implement the assignment such that the similar data points are in the same cluster and dissimilar data points are in different groups \cite{kernel kmeans 1}.

\par{Specifically, let $\phi(\cdot):x\in \mathcal{R}^d \longrightarrow \mathcal{H}$} be a feature mapping induced by a kernel function which maps $x$ onto a reproducing kernel hilbert space $\mathcal{H}$, the objective of kernel $k$-means can be formulated as the following optimization problem,
\begin{equation}
\begin{aligned}
  & \underset{\mathbf{P},\mathbf{M}}{\text{min}}
  & &\sum_{c=1}^{k}\sum_{j=1}^{n}\mathbf{P}_{jc}||\phi(x_{j})-\mathbf{m}_{c}||^2, \\
  & \text{s.t.}
  & & \mathbf{P}_{jc} \in \{0,1\}; \quad \sum_{c=1}^{k} \mathbf{P}_{jc}=1,
\end{aligned}
\label{KernelKmeans}
\end{equation}
where $\mathbf{m}_{c} = \frac{\sum_{j\in\Omega_{c}}\phi(x_{j})}{\sum_{j\in\Omega_{c}}\mathbf{P}_{jc}}$.

Because $\phi(\cdot)$ is usually an implicit mapping, we can utilize kernel trick to formulate (\ref{KernelKmeans}) as the following matrix form
\begin{equation}
\begin{aligned}
  & \underset{\mathbf{P}}{\text{min}}
  & &\text{Tr}(\mathbf{K})-\text{Tr}(\mathbf{L}^\frac{1}{2}\mathbf{P}^T\mathbf{K}\mathbf{P}\mathbf{L}^{\frac{1}{2}}) \\
  & \text{s.t.}
  & & \mathbf{P}\in \{0,1\}^{n\times k}; \quad \mathbf{P}\mathbf{1}_{k}=\mathbf{1}_{n}.
\end{aligned}
\label{KernelKmeansMatrix}
\end{equation}
where $\mathbf{K}$ is a kernel matrix with $\mathbf{K}_{ij}=\phi(x_i)^{T}\phi(x_j)$, $\mathbf{L}=diag([n_{1}^{-1},\cdots,n_{k}^{-1}])$ with $n_{c}={\sum_{j\in\Omega_{c}}\mathbf{P}_{jc}}$ and $\mathbf{1}_{l}\in \mathcal{R}^{l}$ is a column vector with all elements 1.

While kernel $k$-means performs well on nonlinear samples, it is unsuitable for large-scale clustering problems due to needing to compute and save a kernel matrix of size $n \times n$, whose cost is quadratic to the size of the dataset. Moreover, we find that the centroids computed in kernel $k$-means based on some certain kernels, such as Radial Basis Function (RBF) kernel \cite{RBF kmeans} and Laplace kernel \cite{Laplace kernel} actually deviate from the support domain of the mapped data, which can be viewed as outliers in strictly distributional sense \cite{OutlierCite}.

\subsection{Euler Kernel}
The ($j,q$)-entry of Euler kernel matrix is shown as follows \cite{EulerKernel}
\begin{equation}
\begin{aligned}
   \mathbf{K}_{j,q}
   = \quad &\frac{1}{2}\sum_{l=1}^{d}\cos(\alpha\pi(x_{j}(l)-x_{q}(l))) \\
   & -i\frac{1}{2}\sum_{l=1}^{d}\sin(\alpha\pi(x_{j}(l)-x_{q}(l))),
\end{aligned}
\end{equation}
where $i$ is the imaginary unit. We can find that different from existing traditional Mercer kernels, Euler kernel is defined on the complex space.
\par{In fact, \cite{EulerKernel} has shown that Euler kernel induces an explicit complex mapping $\phi(x)$ that maps a $d$-dimensional real data point $x_j$ onto a equi-dimensional RKHS space, just as follows}
\begin{equation}
\phi(x_j)=\frac{1}{\sqrt{2}}e^{i\alpha\pi x_j}=\frac{1}{\sqrt{2}}(\cos(\alpha\pi x_j)+i\sin(\alpha\pi x_j)).
\end{equation}
Naturally, the square Euclidean distance between $\phi(x_j)$ and $\phi(x_q)$ in the complex RHKS space is given by
\begin{equation}
\begin{aligned}
d(\phi(x_j),\phi(x_q))&=||\phi(x_j)-\phi(x_q)||^2\\&=\sum_{l=1}^{d}(1-\cos(\theta_{j}(l)-\theta_{q}(l))).
\end{aligned}
\end{equation}
Although the mapping $\phi(x)$ and Euler kernel are defined on the complex space, $d(\cdot,\cdot)$ is a real value so it can be used to measure the dissimilarities among data points.
\par{Very different from the traditional kernels, Euler kernel induces an explicit and equi-dimensional complex mapping and thus masterly avoids the high computational complexity and memory cost. Moreover, this explicit mapping is bounded on [-1,1], meaning that the Euler kernel is robust to noise and outliers.}

\subsection{Euler $k$-means}
Euler $k$-means \cite{EulerKmeans1}\cite{EulerKmeans2} firstly maps $d$-dimensional data points onto the equi-dimensional RKHS space via a complex mapping $f(x)=\frac{1}{\sqrt{2}}e^{i\alpha \pi x}$ and then employs the popular $k$-means on the mapped data.
\par{Specifically, the representative centroid $\mathbf{m}_{c}$ of the $c$-th group can be explicitly represented in the complex space as}
\begin{equation}
\mathbf{m}_{c}=\frac{1}{\sqrt{2}}(\mathbf{a}_{c}+i\mathbf{b}_{c}),
\label{EulerKmeansCentre}
\end{equation}
where $\mathbf{a}_{c}=\frac{1}{n_c}\sum_{j\in \Omega_{c}}\cos(\alpha \pi x_j)$ and $\mathbf{b}_{c}=\frac{1}{n_c}\sum_{j\in \Omega_{c}}\sin(\alpha \pi x_j)$, $\Omega_{c}$ is a mark set of data points belonging to the $c$-th partition.
\par{Based on the explicit representation of $\mathbf{m}_c$, we can derive the criterion of Euler $k$-means as follows}
\begin{equation}
\begin{aligned}
  & \underset{\mathbf{P},\mathbf{M}}{\text{min}}
  & &\sum_{c=1}^{k}\sum_{j=1}^{n}\mathbf{P}_{jc}||\phi(x_{j})-\mathbf{m}_{c}||^2, \\
  & \text{s.t.}
  & & \mathbf{P}_{jc} \in \{0,1\}; \quad \sum_{c=1}^{k} \mathbf{P}_{jc}=1.
\end{aligned}
\label{EulerKmeans}
\end{equation}
where $\mathbf{m}_c$ is given by (\ref{EulerKmeansCentre}), and $||\phi(x_{j})-\mathbf{m}_{c}||^2$ is given by
\begin{equation}\label{EulerKmeansDis}
\frac{d}{2}+||\mathbf{m}_c||^2-\cos(\alpha \pi x_j)^T\mathbf{a}_c-\sin(\alpha \pi x_j)^T\mathbf{b}_c.
\end{equation}
Euler $k$-means optimizes (\ref{EulerKmeans}) via an iterative update strategy until it converges at its local optimum: (1) explicitly updates the representative prototype according to (\ref{EulerKmeansCentre}) by fixing the label matrix; (2) assigns the data points to their closest centroid according to (\ref{EulerKmeansDis}) by fixing the centroids.

{It is easy to conclude that in addition to inheriting the merits of $k$-means of simplicity and scalability to large data sets, Euler $k$-means is also robust to noise and outliers. However, we find that Euler $k$-means determines the centroids which deviate from the support domain of the mapped data, these centroids are actually outliers in strictly distributional sense \cite{OutlierCite}. It should not be quite reasonable to utilize these outlier-like centroids to represent the distribution of data points.}

\section{Rectified Euler $k$-means}
In this section, we detail our proposed Rectified Euler $k$-means 1 (REK1) and Rectified Euler $k$-means 2 (REK2), respectively. In the following, we provide their model formulation, optimization strategy as well as convergence analysis in separated sub-sections.
\subsection{Rectified Euler $k$-means 1}
\subsubsection{Model Formulation}
Following Euler $k$-means (EulerK), we also model the $c$-th centroid as $\mathbf{m}_c=\frac{1}{\sqrt{2}}(\mathbf{a}_c+i\mathbf{b}_c)$. But in order to guarantee that the centroids exactly reside on the support domain of the mapped data, we rectify EulerK by imposing the constraint of unit length on each centroid, thus we formulate REK1 as
  \begin{equation}
  \begin{aligned}
  & \underset{\mathbf{P},\mathbf{M}}{\text{min}}
  & &\sum_{c=1}^{k}\sum_{j=1}^{n}\mathbf{P}_{jc}||\phi(\mathbf{x}_{j})-\mathbf{m}_{c}||_{2}^2, \\
  & \text{s.t.}
  & &\mathbf{a}_{c,l}^2+\mathbf{b}_{c,l}^2=1, \quad c = 1,\cdots,k; l = 1,2,\cdots,d, \\
  && &\mathbf{P}_{jc} \in \{0,1\}; \quad \sum_{c=1}^{k} \mathbf{P}_{jc}=1,
  \end{aligned}
  \label{REK1}
  \end{equation}
where $\mathbf{a}_{c,l}$ is the $l$-th dimension of $\mathbf{a}_c$.
\subsubsection{Problem Solution}
The objective function (\ref{REK1}) of REK1 is an equality constrained optimization problem, its Lagrangian formula is as follows:
  \begin{equation}
  \begin{aligned}
  f(\mathbf{P},\mathbf{M},\mathbf{\Lambda})=
  & \frac{1}{2}\sum_{c=1}^{k}\sum_{j=1}^{n}\mathbf{P}_{jc}||\phi(\mathbf{x}_{j})-\mathbf{m}_{c}||_{2}^2 \\
  & + \frac{1}{2}\sum_{c=1}^{k}\sum_{l=1}^{d}\mathbf{\Lambda}_{c,l}(\mathbf{a}_{c,l}^2+\mathbf{b}_{c,l}^2-1),
  \end{aligned}
  \label{REK1Lag}
  \end{equation}
where $\mathbf{\Lambda} \in \mathcal{R}^{d\times k}$ is dual variable matrix. We resort to the iterative update strategy to acquire the final solution.
\par{Fix $\mathbf{P}$ and $\mathbf{\Lambda}$, the derivative of (\ref{REK1Lag}) with respect to $\mathbf{m}_{c,l}$ is}
\begin{equation}
\frac{\partial f}{\partial \mathbf{m}_{c,l}} = -\sum_{j=1}^{n}\mathbf{P}_{jc}(\phi(\mathbf{x}_{j,l})-\mathbf{m}_{c,l})+\mathbf{\Lambda}_{c,l}\mathbf{m}_{c,l}.
\end{equation}
where $\mathbf{m}_{c,l}$ is the $l$-th dimension of $\mathbf{m}_c$ and $\mathbf{\Lambda}_{c,l}$ is the value of $c$-th row and $l$-th column of $\mathbf{\Lambda}$.
\par{Fix $\mathbf{M}$ and $\mathbf{\Lambda}$, the derivative of (\ref{REK1Lag}) with respect to} $\mathbf{P}_{jc}$ is
\begin{equation}
\frac{\partial f}{\partial \mathbf{P}_{jc}} = ||\phi(\mathbf{x}_{j})-\mathbf{m}_{c}||_{2}^2.
\end{equation}
\par{Fix $\mathbf{P}$ and $\mathbf{M}$, the derivative of (\ref{REK1Lag}) with respect to $\mathbf{\Lambda}_{c,l}$ is}
\begin{equation}
\frac{\partial f}{\partial \mathbf{\Lambda}_{c,l}} = \mathbf{a}_{c,l}^2+\mathbf{b}_{c,l}^2-1.
\end{equation}
Let $\frac{\partial f}{\partial \mathbf{m}_{c,l}}=0$, $\frac{\partial f}{\partial \mathbf{P}_{jc}}=0$, $\frac{\partial f}{\partial \mathbf{\Lambda}_{c,l}}$ = 0, we can obtain that $\mathbf{m}_{c,i}$ = $\frac{{\sum_{j \in {\Omega_{c}}}}{\phi(\mathbf{x}_j)}}{n_{c}+\mathbf{\Lambda}_{jc}}$ and $n_{c}+\mathbf{\Lambda}_{jc}=\sqrt{(\sum_{{j \in {\Omega_{c}}}}\cos(\mathbf{x}_{j,l}))^2+(\sum_{{j \in {\Omega_{c}}}}\sin(\mathbf{x}_{j,l}))^2}$.
Therefore, the update formula of $\mathbf{m}_{c,l}$ is as follows
  \begin{equation}
  \mathbf{m}_{c,l} = \frac{{\sum_{j \in {\Omega_{c}}}}{\phi(\mathbf{x}_{j,l})}}{\sqrt{(\sum_{{j \in {\Omega_{c}}}}\cos(\mathbf{x}_{j,l}))^2+(\sum_{{j \in {\Omega_{c}}}}\sin(\mathbf{x}_{j,l}))^2}}.
  \label{REK1UpdateM}
  \end{equation}
the update formula of $\mathbf{P}_{jc}$ is as follows
\begin{equation} \mathbf{P}_{jc}=\left\{
\begin{aligned}
1, &  & if \quad  c = arg min \quad ||\phi(\mathbf{x}_j)-\mathbf{m}_{c}||^2; \\
0, &  & otherwise.
\label{REK1UpdateP}
\end{aligned}
\right.
\end{equation}
In summary, REK1 optimizes the criterion (\ref{REK1}) via the follwoing iterative update strategy until (\ref{REK1}) converges:
\begin{itemize}
  \item update $\mathbf{m}_c$ according to (\ref{REK1UpdateM}) by fixing $\mathbf{P}$;
  \item assign the label for each data point according to (\ref{REK1UpdateP}) by fixing $\mathbf{M}$.
\end{itemize}
\subsubsection{Convergence Analysis and Complexity Analysis}
It is obvious that the objective function (\ref{REK1}) of REK1 is bounded. The objective function (\ref{REK1}) will monotonically decrease via iterative update for the cluster centroids and assignment labels for the data points. Therefore, following \textbf{Theorem 11} in \cite{ConvergenceKmeans}, the REK1 algorithm will converge to a local minimum in a finite number of iterations.
\par{Note that REK1 inherits the low computational complexity of EK, which is comparable to $k$-means. Specifically, REK1 takes $\mathcal{O}(nd)$ and $\mathcal{O}(ndk)$ for updating the centroid matrix $\mathbf{M}$ and label assignment matrix $\mathbf{P}$ in each iteration, respectively. In summary, REK1 takes $\mathcal{O}(Tndk)$ until convergence, where $T$ is the number of iterations. As Figure \ref{ConvergenceREK1_Scene13} and \ref{ConvergenceREK1_Caltech256} show, $T$ is very small, so the REK1 algorithm enjoys low computational complexity while it is a nonlinear clustering approach.

\subsection{Rectified Euler Kmeans 2}
\subsubsection{Model Formulation}
In order to assure the strict belonging to the support domain of mapped data for the centroids, we view each centroid as a mapped image of one point in the original space, i.e.,
$\mathbf{m}_c=\frac{1}{\sqrt{2}}e^{i\mathbf{u}_{c}}=\frac{1}{\sqrt{2}}(\cos(\mathbf{u}_{c})+i\sin(\mathbf{u}_{c}))$, so the square Euclidean distance $||\phi({\mathbf{x}_{j}})-\mathbf{m}_{c}||_{2}^{2}$ between $\phi({\mathbf{x}_{j}})$ and $\mathbf{m}_{c}$ is given by
\begin{equation}\label{REK2Dis}
  ||\phi({\mathbf{x}_{j}})-\mathbf{m}_{c}||_{2}^{2} = \sum_{l=1}^{d}(1-\cos(\mathbf{\theta}_{j,l}-\mathbf{u}_{c,l})),
\end{equation}
where $\theta_{j,l}=\alpha\pi x_{j,l}$ and $x_{j,l}$ is the $l$-th dimension of $x_j$.
Therefore, REK2 is formulated as follows:

  \begin{equation}
  \begin{aligned}
   &\underset{\mathbf{U},\mathbf{P}}{\text{min}}
   &&\sum_{c=1}^{k}\sum_{j=1}^{n}\mathbf{P}_{jc}\sum_{i=1}^{d}(1-\cos(\mathbf{\theta}_{j,l}-\mathbf{u}_{c,l})), \\
   & \text{s.t.}
   && \mathbf{P}_{jc} \in \{0,1\}; \quad \sum_{c=1}^{k} \mathbf{P}_{jc}=1, \\
  \end{aligned}
  \label{REK2}
  \end{equation}
where $\mathbf{U}\in \mathcal{R}^{k\times d}$ is the preimage of the centre matrix in the mapped space.
\subsubsection{Problem Solution}
We optimize (\ref{REK2}) by the iterative update strategy until (\ref{REK2}) converges.
\par{\textbf{Step 1: Fix $\mathbf{P}$, update $\mathbf{U}$}}

If we fix $\mathbf{P}$, (\ref{REK2}) with respect to $\mathbf{U}$ becomes
  \begin{equation}
   \underset{\mathbf{U}}{\text{min}}
   \sum_{c=1}^{k}\sum_{j=1}^{n}\mathbf{P}_{jc}
   \sum_{l=1}^{d}(1-\cos(\mathbf{\theta}_{j,l}-\mathbf{u}_{c,l})),
  \label{ConvSS_1}
  \end{equation}
The update of each dimension of $\mathbf{u}_c$ is uncoupled, so we can update $\mathbf{u}_c$ dimension by dimension as follows:
  \begin{equation}
   \underset{\mathbf{u}_{c,l}}{\text{min}}
   \sum_{\mathbf{x}_{j}\in \mathbf{\Omega}_{c}}
   (1-\cos(\mathbf{\theta}_{j,l}-\mathbf{u}_{c,l})),
  \label{ConvSS_1}
  \end{equation}
That is to optimize the following formula
  \begin{equation}
   \underset{\mathbf{u}_{c,l}}{\text{max}}
   \sum_{\mathbf{x}_{j}\in \mathbf{\Omega}_{c}}
   \cos(\mathbf{\theta}_{j,l}-\mathbf{u}_{c,l}),
  \label{REK2Centre1}
  \end{equation}
Unfolding (\ref{REK2Centre1}), we obtain
  \begin{equation}
   \underset{\mathbf{u}_{c,l}}{\text{max}}
   \sum_{\mathbf{x}_{j}\in \mathcal{G}_{c}}
   \cos(\mathbf{\theta}_{j,l})\cos(\mathbf{u}_{c,l})+\sin(\mathbf{\theta}_{j,l})\sin(\mathbf{u}_{c,l}),
  \label{REK2Centre2}
  \end{equation}
(\ref{REK2Centre2}) next can be reformulated as
  \begin{equation}
   \underset{\mathbf{u}_{c,l}}{\text{max}}
   \quad A\cos(B+\mathbf{u}_{c,l}),
  \label{REK2Centre3}
  \end{equation}
where $A = \sqrt{(\sum_{j \in \Omega_c}{\cos(\alpha\pi x_{j,l})})^2+(\sum_{j \in \Omega_c}{\sin(\alpha \pi x_{j,l})})^2}$ and $B = \arccos(\frac{{\sum_{j \in {\Omega_{c}}}}{\cos(\alpha \pi x_{j,l})}}{A})$. Finally, we obtain the update equation of $\mathbf{u}_{c,l}$ as $\mathbf{u}_{c,l}=-B$.
\par{\textbf{Step 2: Fix $\mathbf{U}$, update $\mathbf{P}$}}
\begin{equation} \mathbf{P}_{jc}=\left\{
\begin{aligned}
1, &  & if \quad  c = arg min \quad ||\phi(\mathbf{x}_j)-\mathbf{m}_{c}||^2; \\
0, &  & otherwise.
\label{REK2UpdateP}
\end{aligned}
\right.
\end{equation}
where $\mathbf{m}_c=\phi(\mathbf{u}_c)$.

\subsubsection{Convergence Analysis and Complexity Analysis}
Obviously, the objective function (\ref{REK2}) of REK2 is also bounded. Moreover, (\ref{REK2}) monotonously decreases in the iterative process for updating the centroid matrix $\mathbf{M}$ and the cluster assignment matrix $\mathbf{P}$. Therefore, the REK2 algorithm will converge at a local minimum after a finite number of iterations.
\par{REK2 costs the computational complexity comparable to $k$-means. Specifically, REK2 takes $\mathcal{O}(nd)$ and $\mathcal{O}(ndk)$ for computing the centroid matrix $\mathbf{M}$ and the cluster assignment matrix $\mathbf{P}$ in each iteration, respectively. Therefore, REK2 algorithm will converge in $\mathcal{O}(Tndk)$, where $T$ is the number of iterations until convergence. As Figure \ref{ConvergenceREK2_Scene13} and \ref{ConvergenceREK2_Caltech256} show, $T$ is very small. In summary, REK2 enjoys the low computational complexity as linear method while it is a nonlinear approach.}

 \begin{figure}
 \centering
 \includegraphics[width=7.5cm]{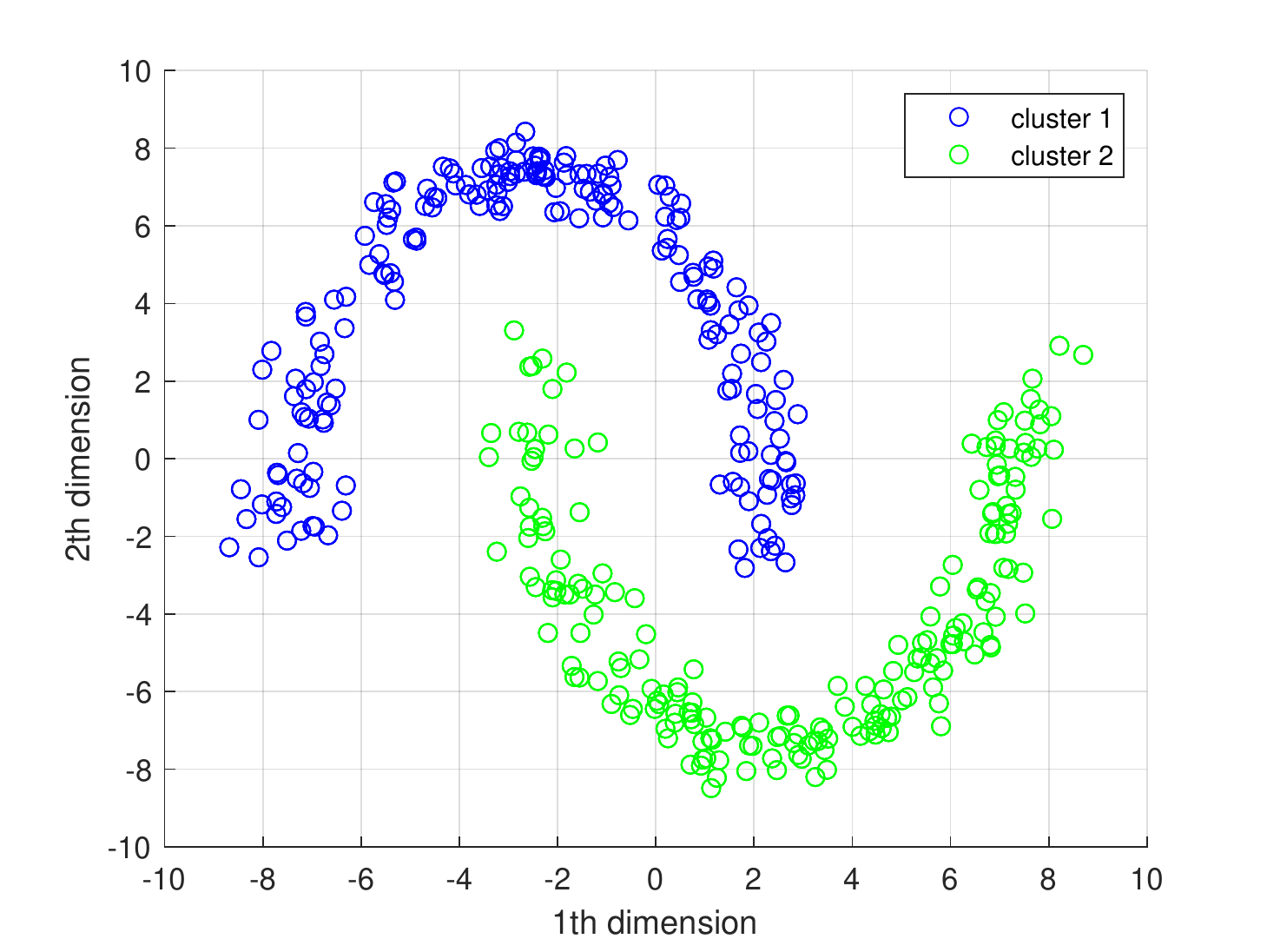}
 \caption{Half-moon data.}
 \label{HalfmoonData}
 \end{figure}

\begin{figure}
\centering
\subfigure[]{%
\includegraphics[width=5.55cm]{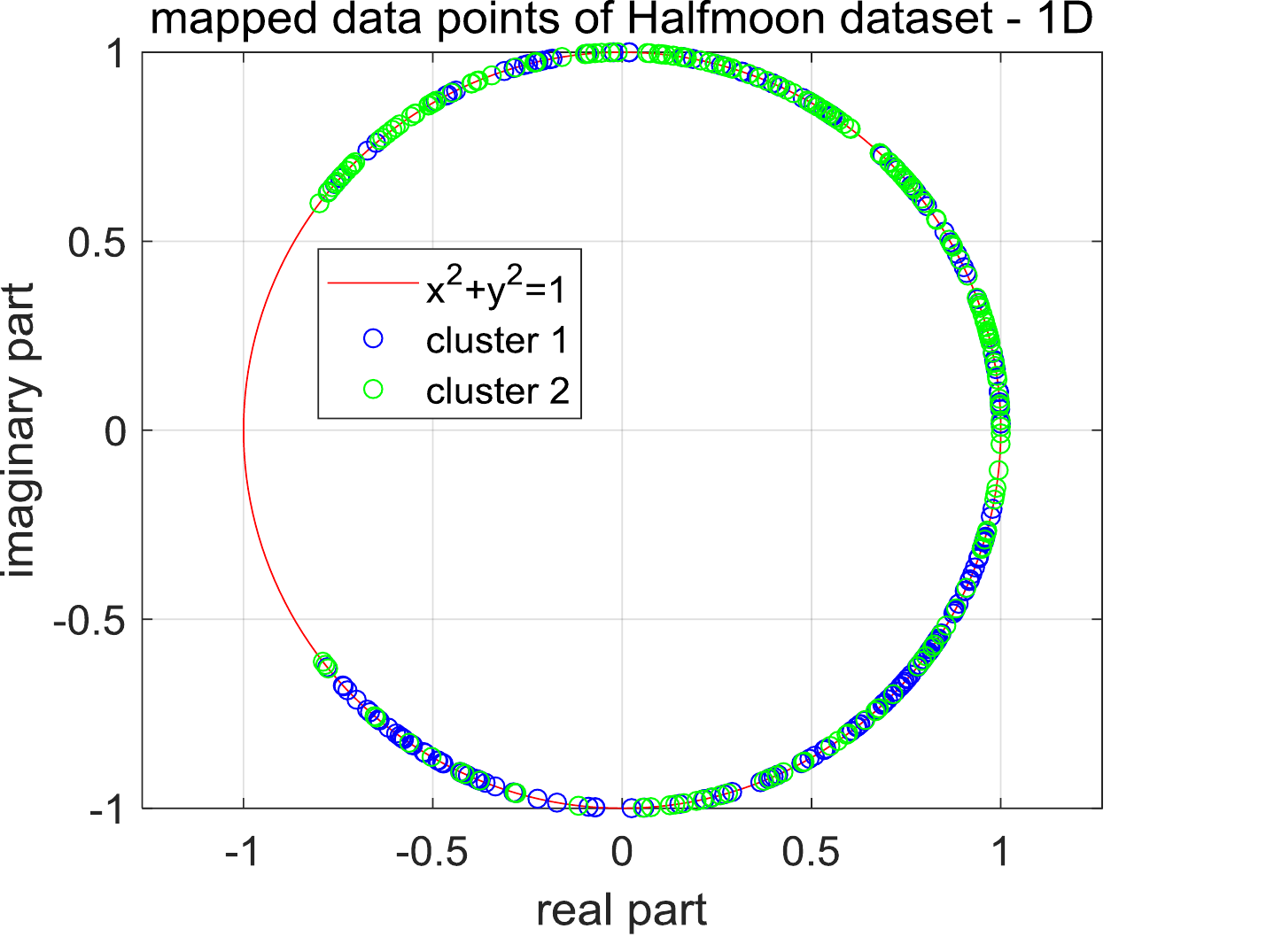}
\label{MapHalfmoon1D}}
\subfigure[]{%
\includegraphics[width=5.55cm]{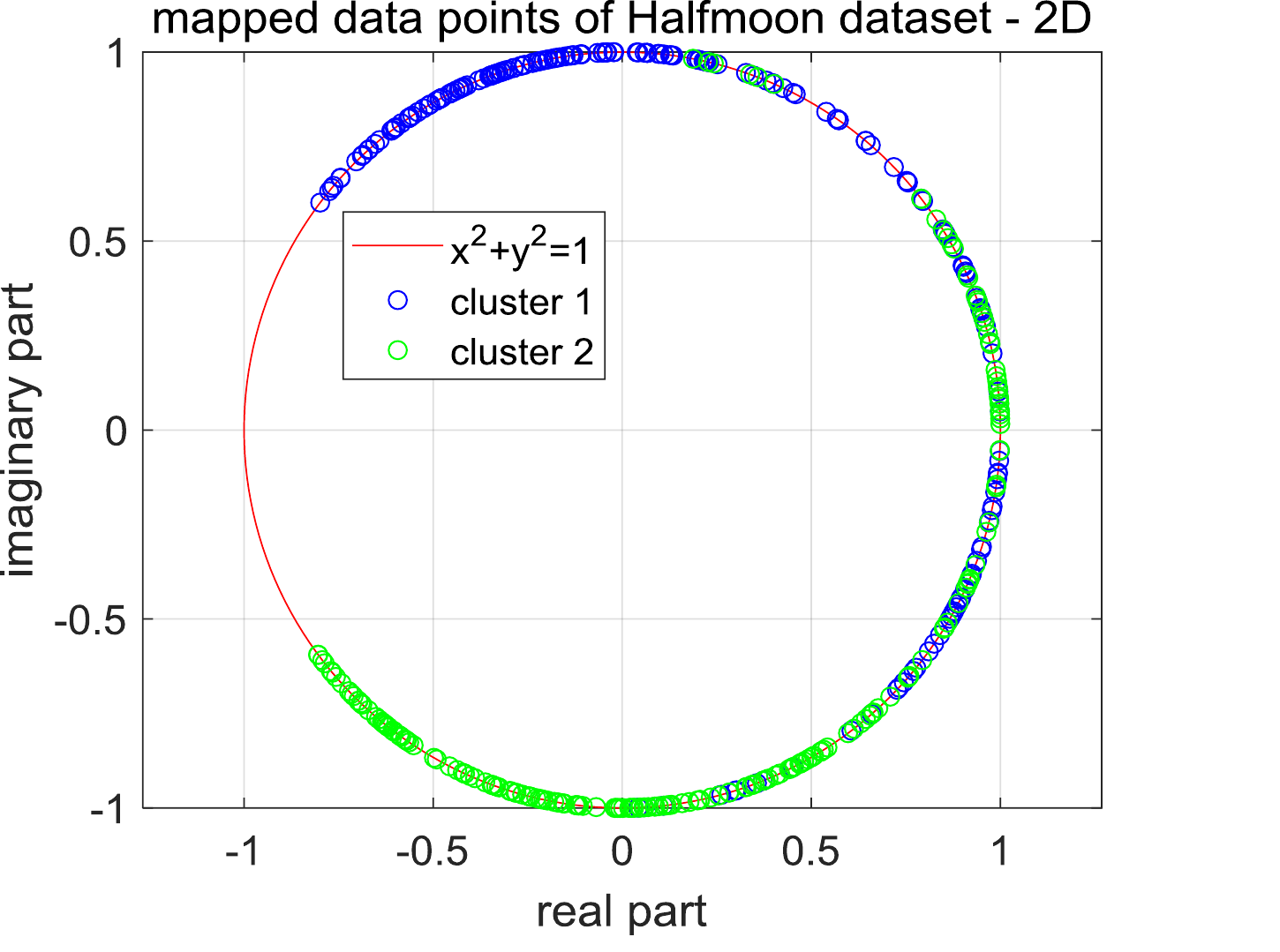}
\label{MapHalfmoon1D}}
\caption{Mapped data points of Halfmoon dataset. Because it is difficult to visualize the data points in 2-dimensional complex space, we show these data points over separate dimensions.}
\label{Map_Halfmoon}
\end{figure}

\begin{figure*}
\centering
\subfigure[]{%
\includegraphics[width=5.55cm]{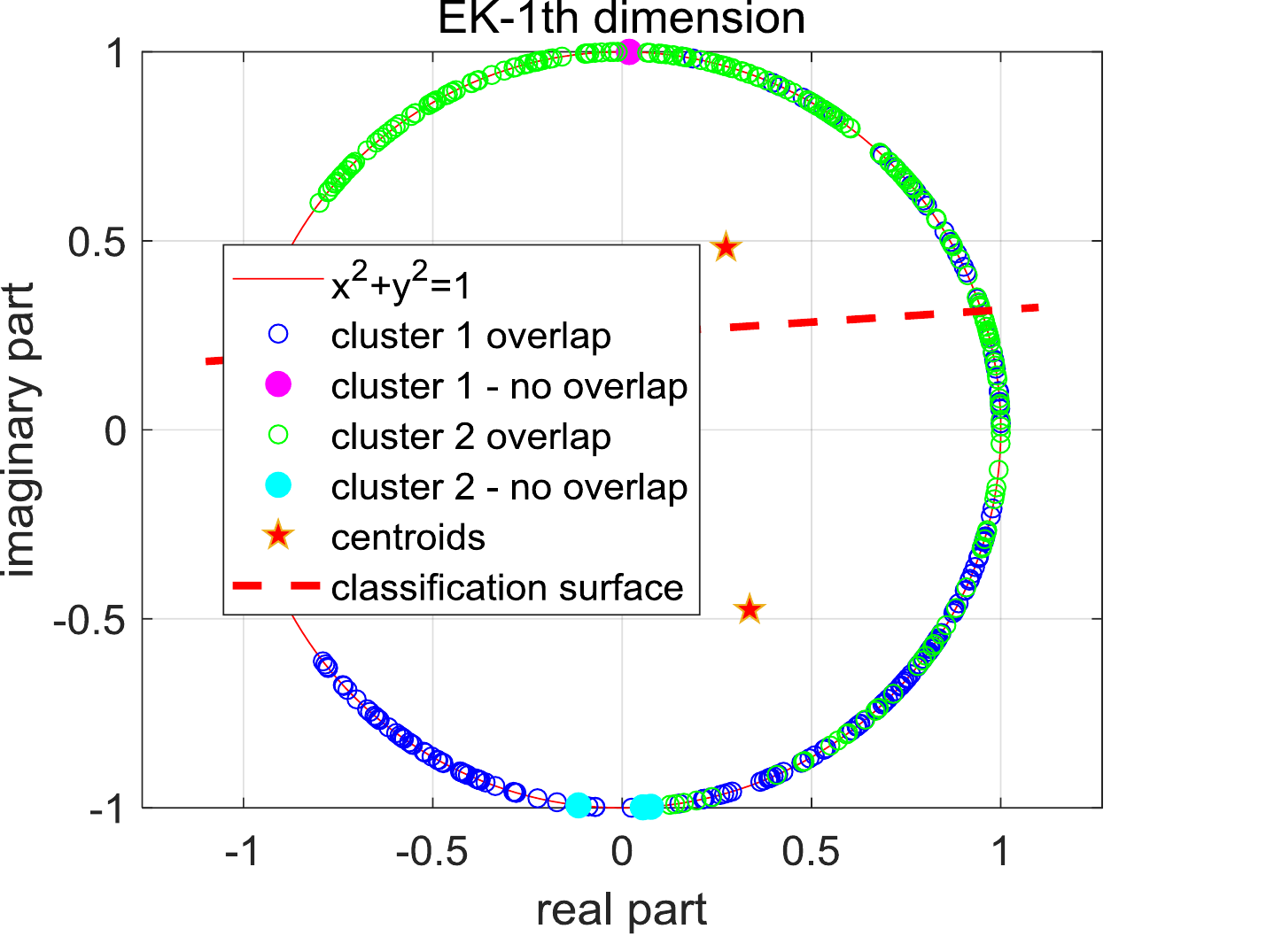}
\label{Halfmoon1DEK}}
\subfigure[]{%
\includegraphics[width=5.55cm]{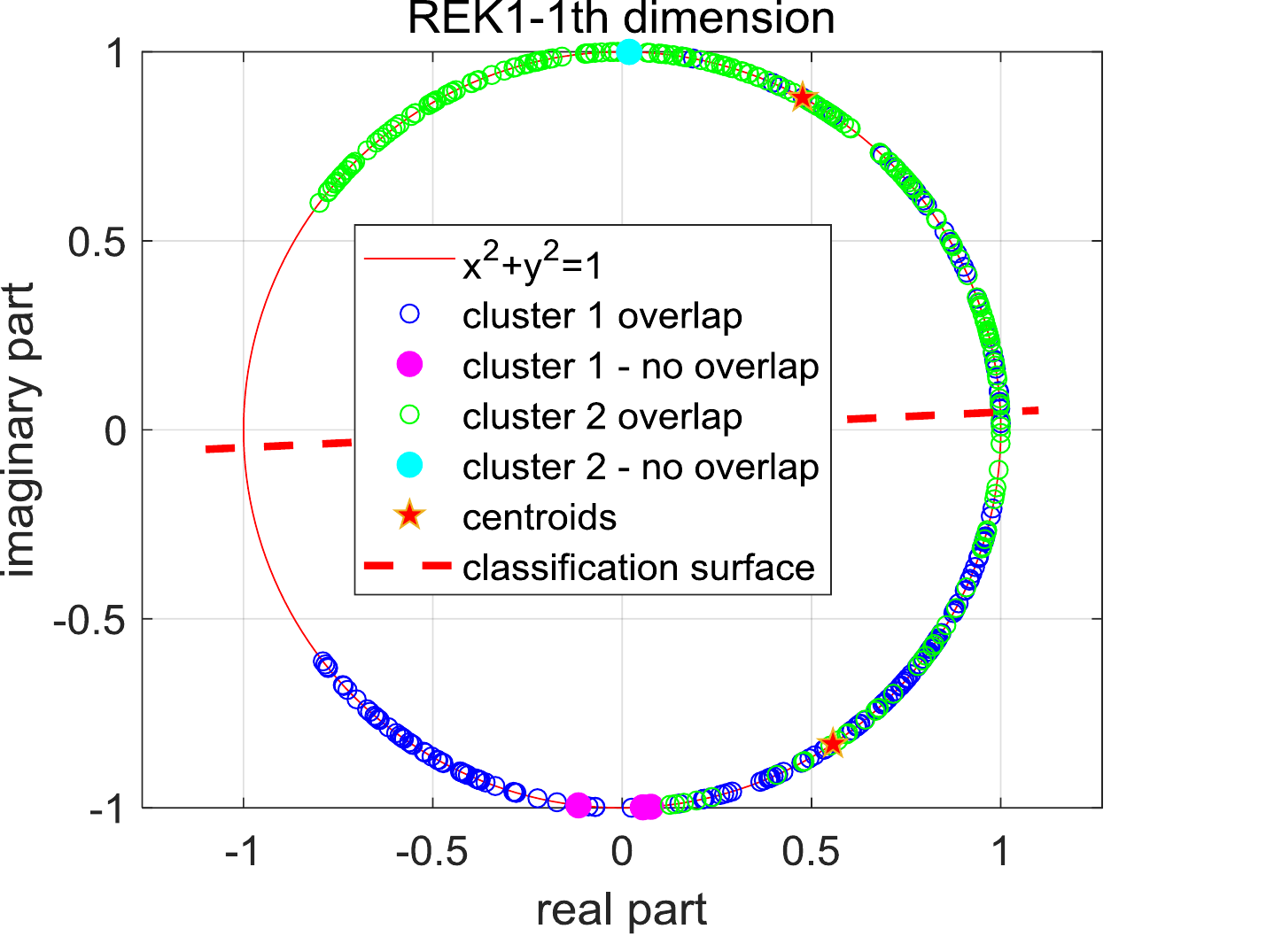}
\label{Halfmoon1DREK1}}
\subfigure[]{%
\includegraphics[width=5.55cm]{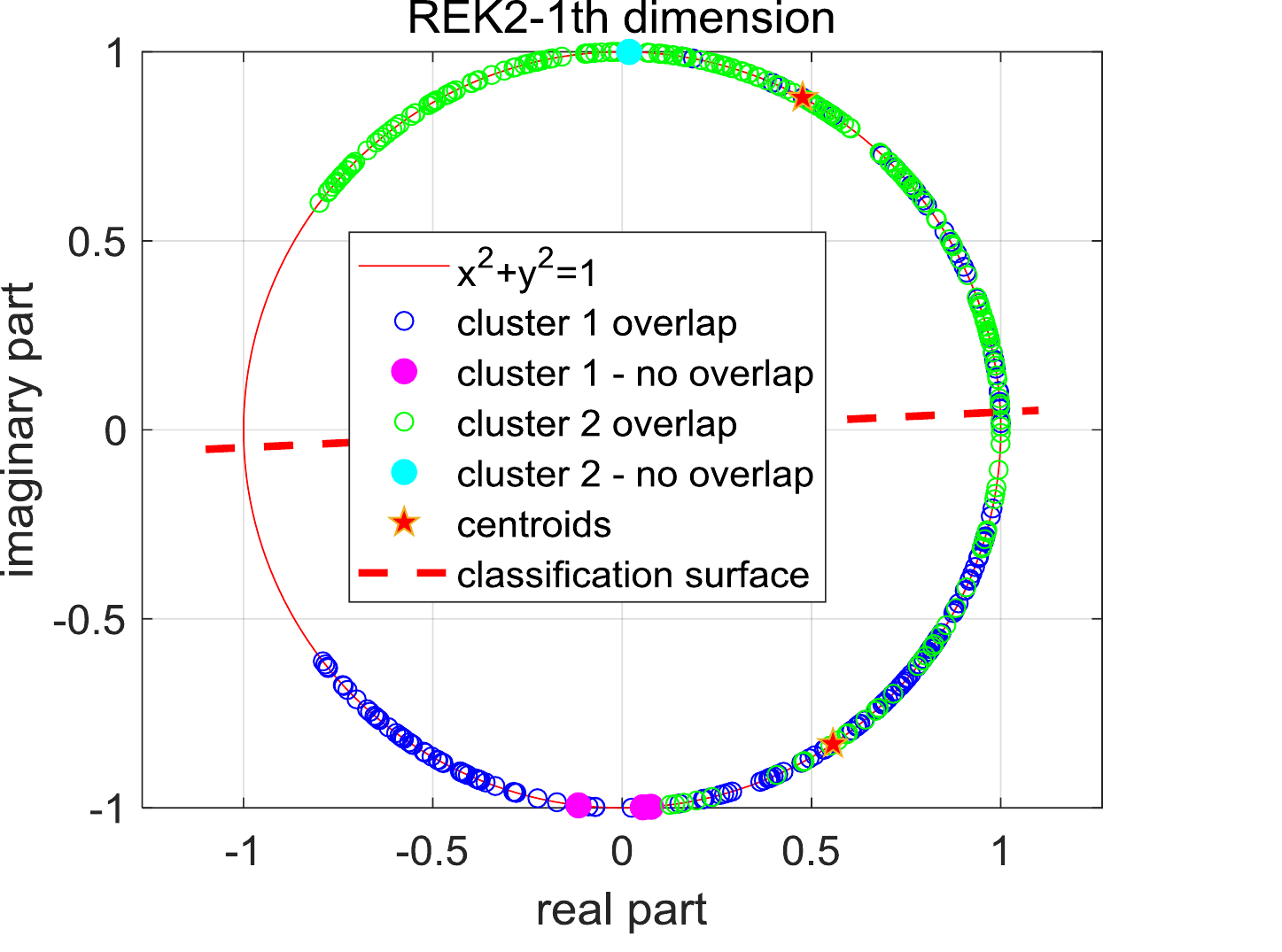}
\label{Halfmoon1DREK2}}
\quad
\subfigure[]{%
\includegraphics[width=5.55cm]{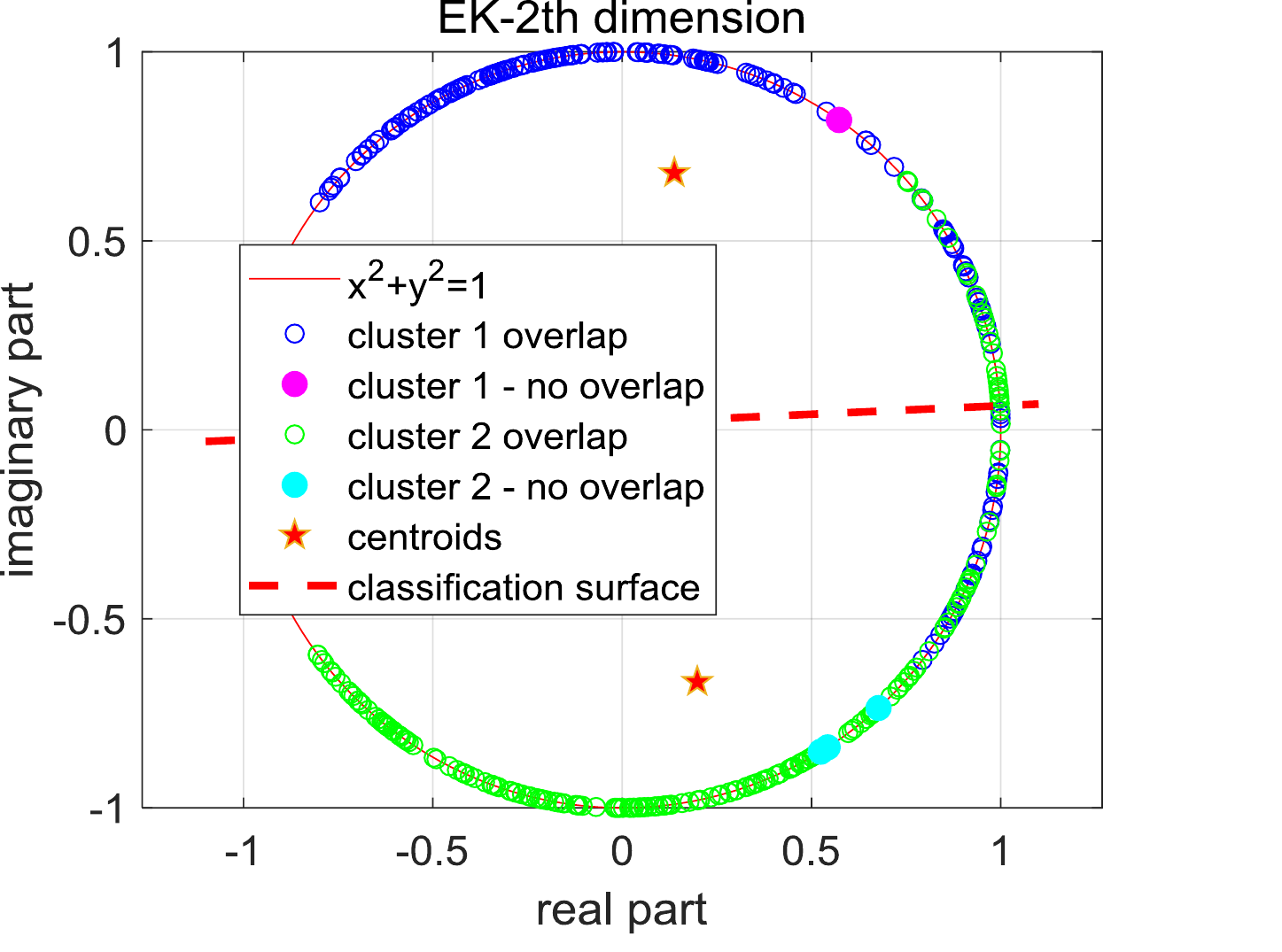}
\label{Halfmoon2DEK}}
\subfigure[]{%
\includegraphics[width=5.55cm]{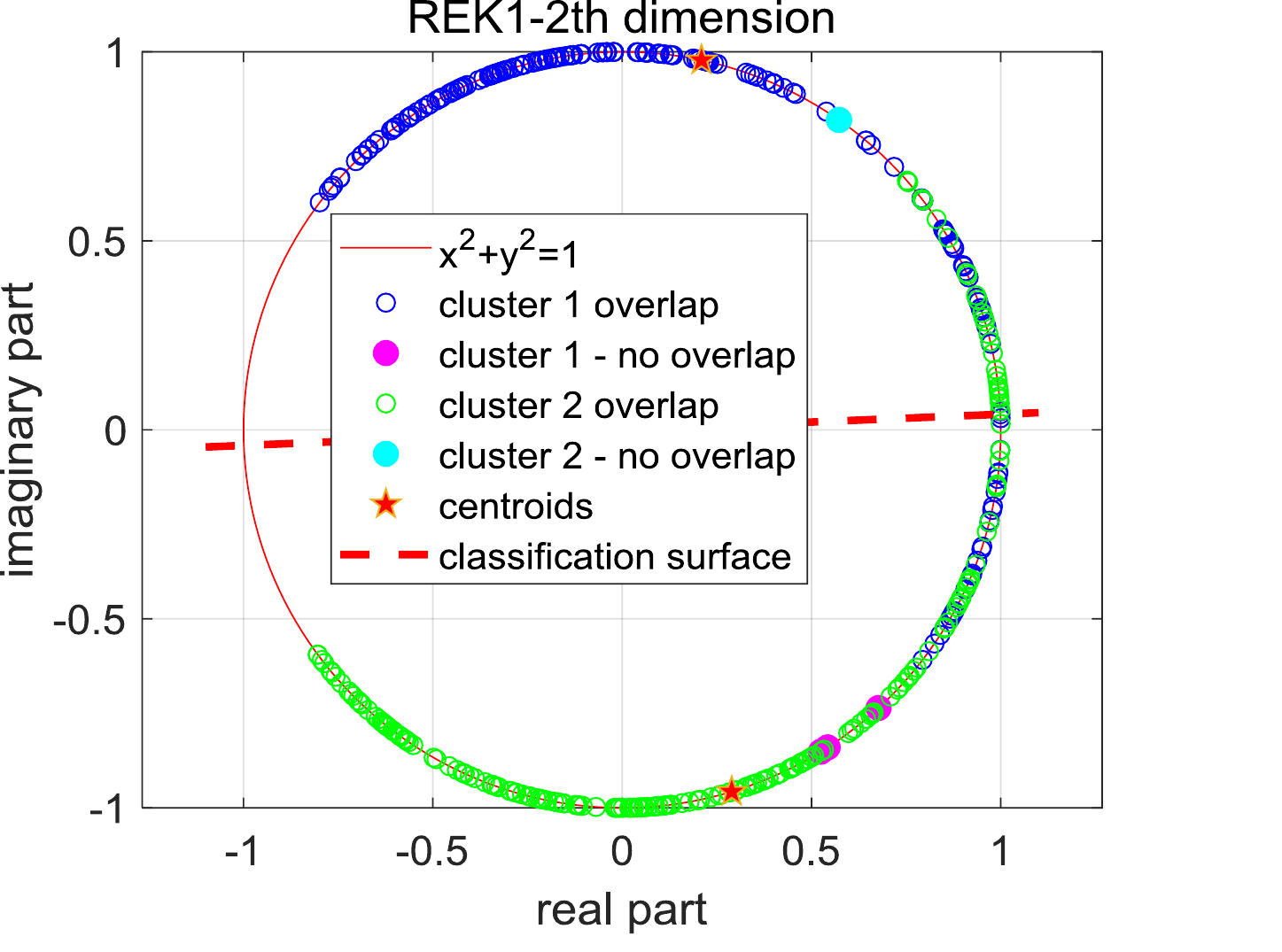}
\label{Halfmoon2DREK1}}
\subfigure[]{%
\includegraphics[width=5.55cm]{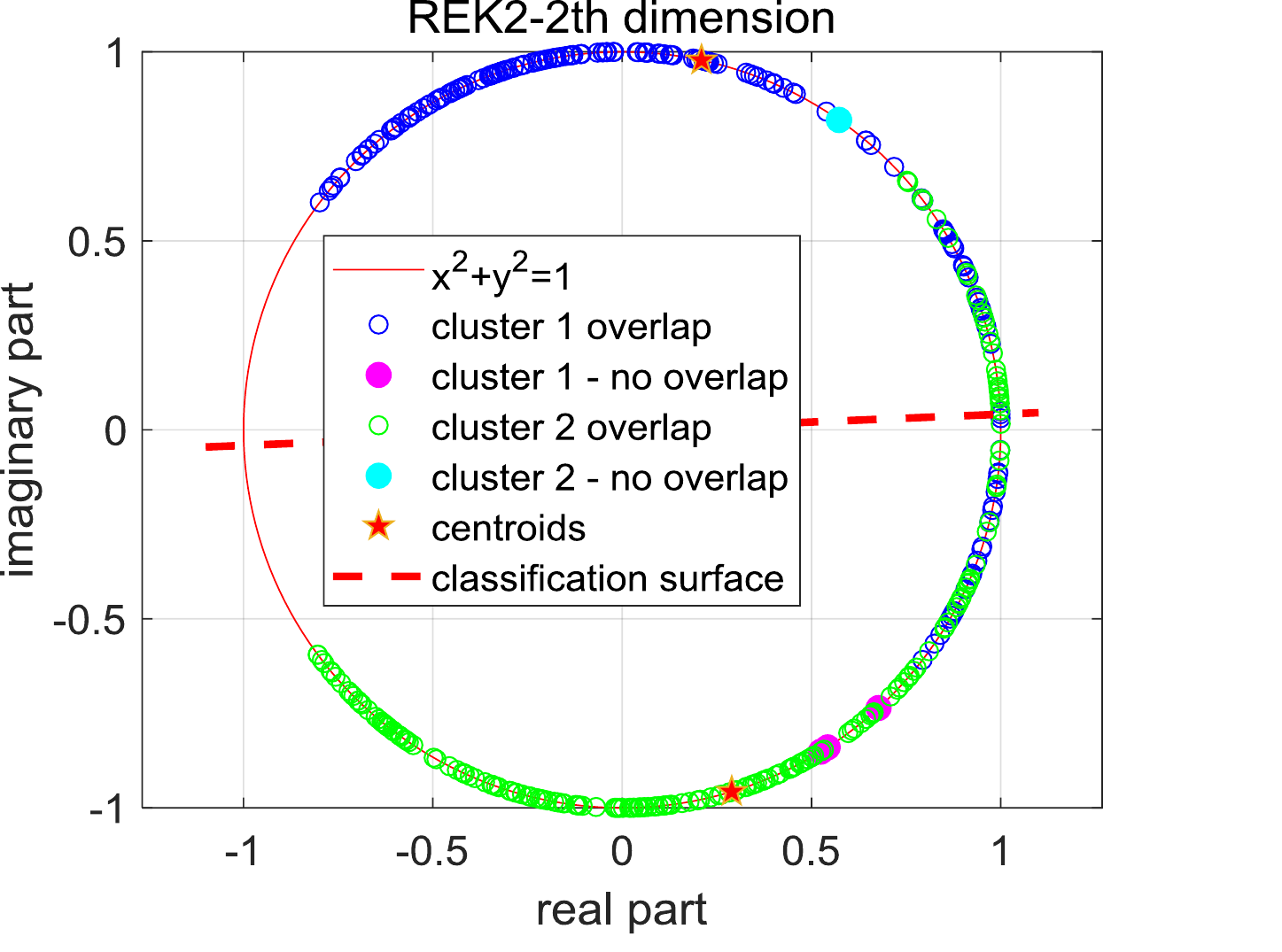}
\label{Halfmoon2DREK2}}
\caption{Mapped data points, centroids obtained by EulerK, REK1 and REK2 on Halfmoon dataset, respectively and corresponding clustering surfaces. Because it is difficult to visualize the data points in 2-dimensional complex space, we show the above items over separate dimensions.}
\label{SimulationResult}
\end{figure*}

\begin{figure*}
\centering
\subfigure[]{%
\includegraphics[width=5.55cm]{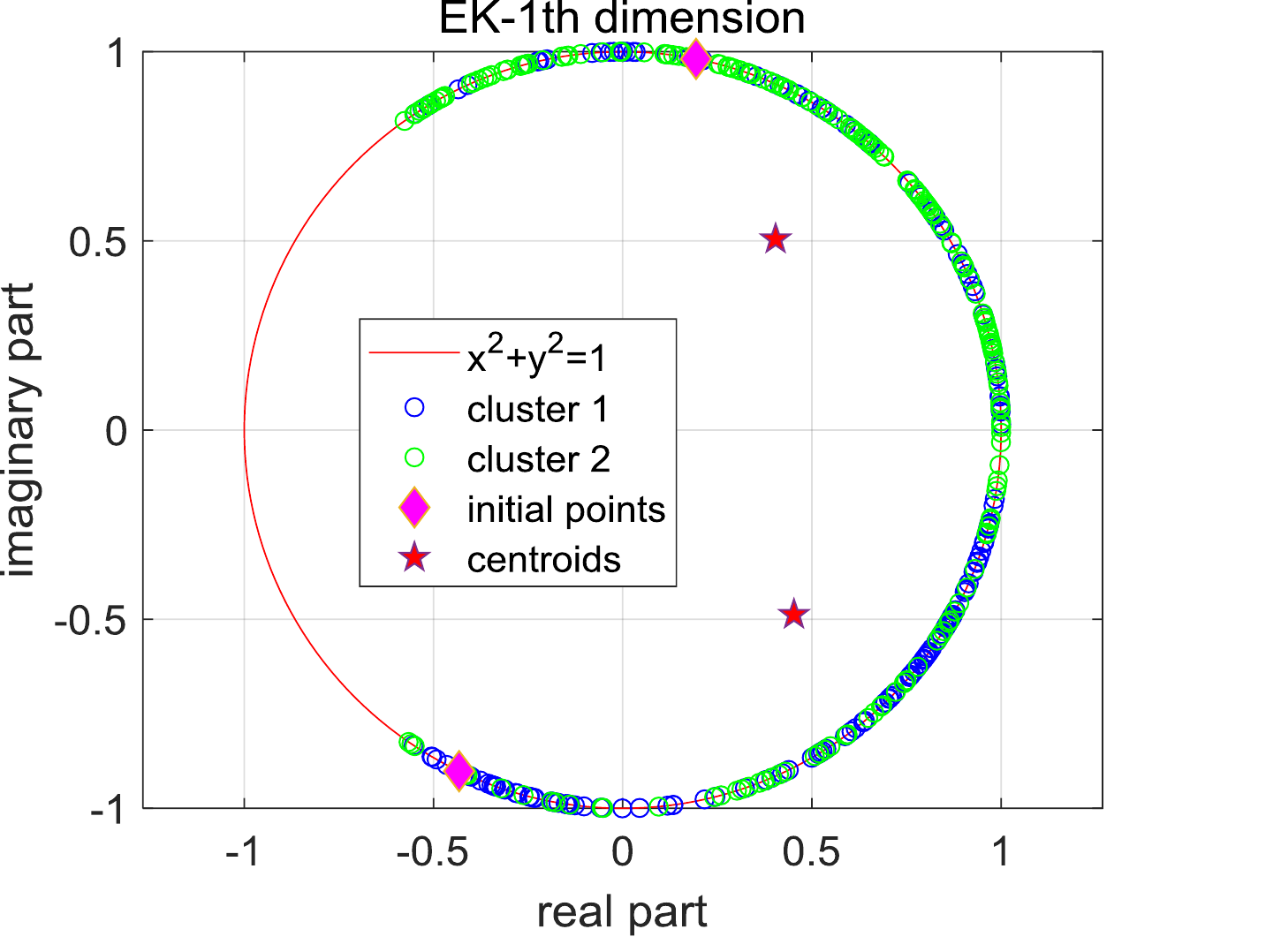}
\label{Halfmoon1DEK0}}
\subfigure[]{%
\includegraphics[width=5.55cm]{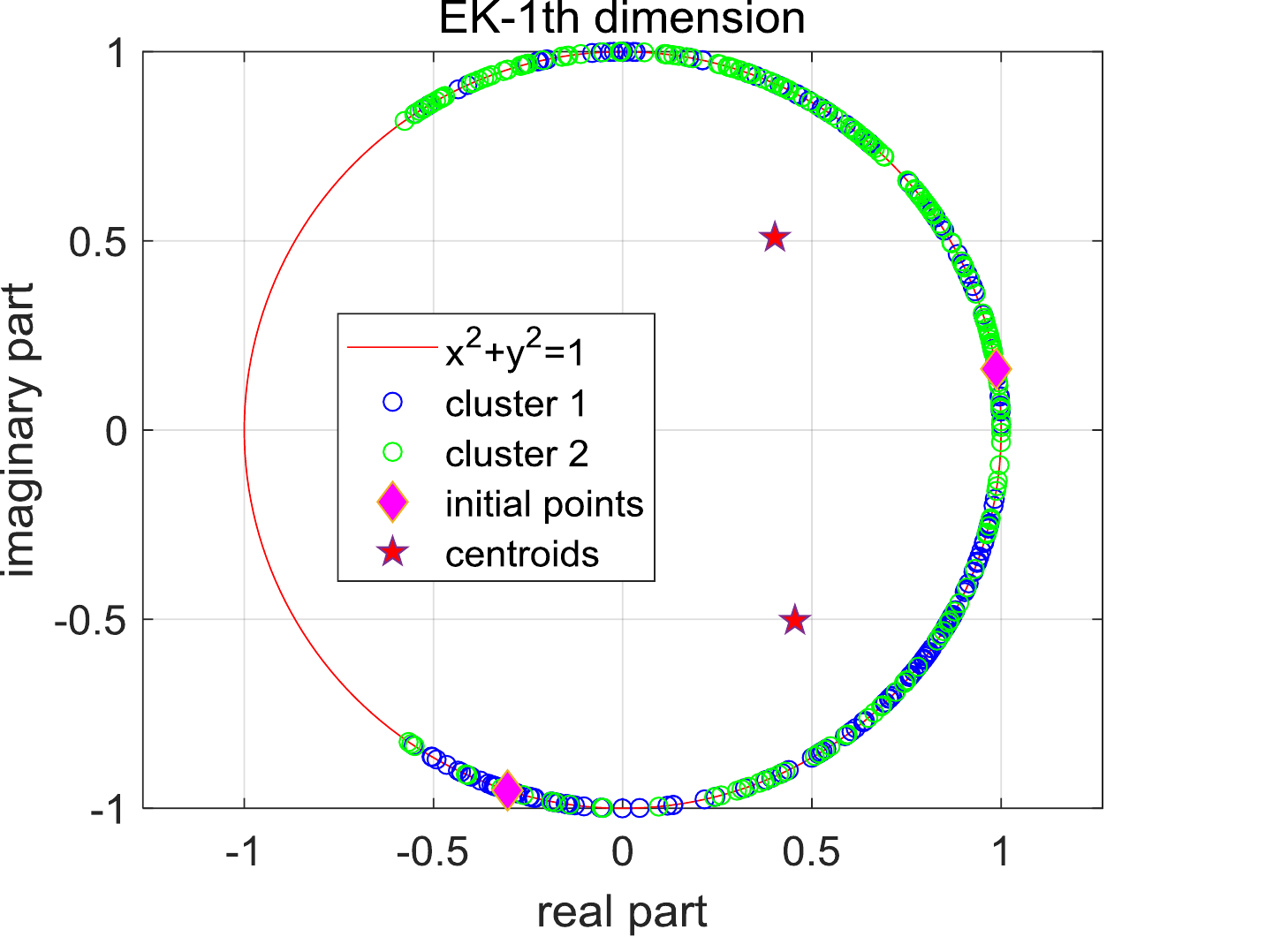}
\label{Halfmoon1DREK10}}
\subfigure[]{%
\includegraphics[width=5.55cm]{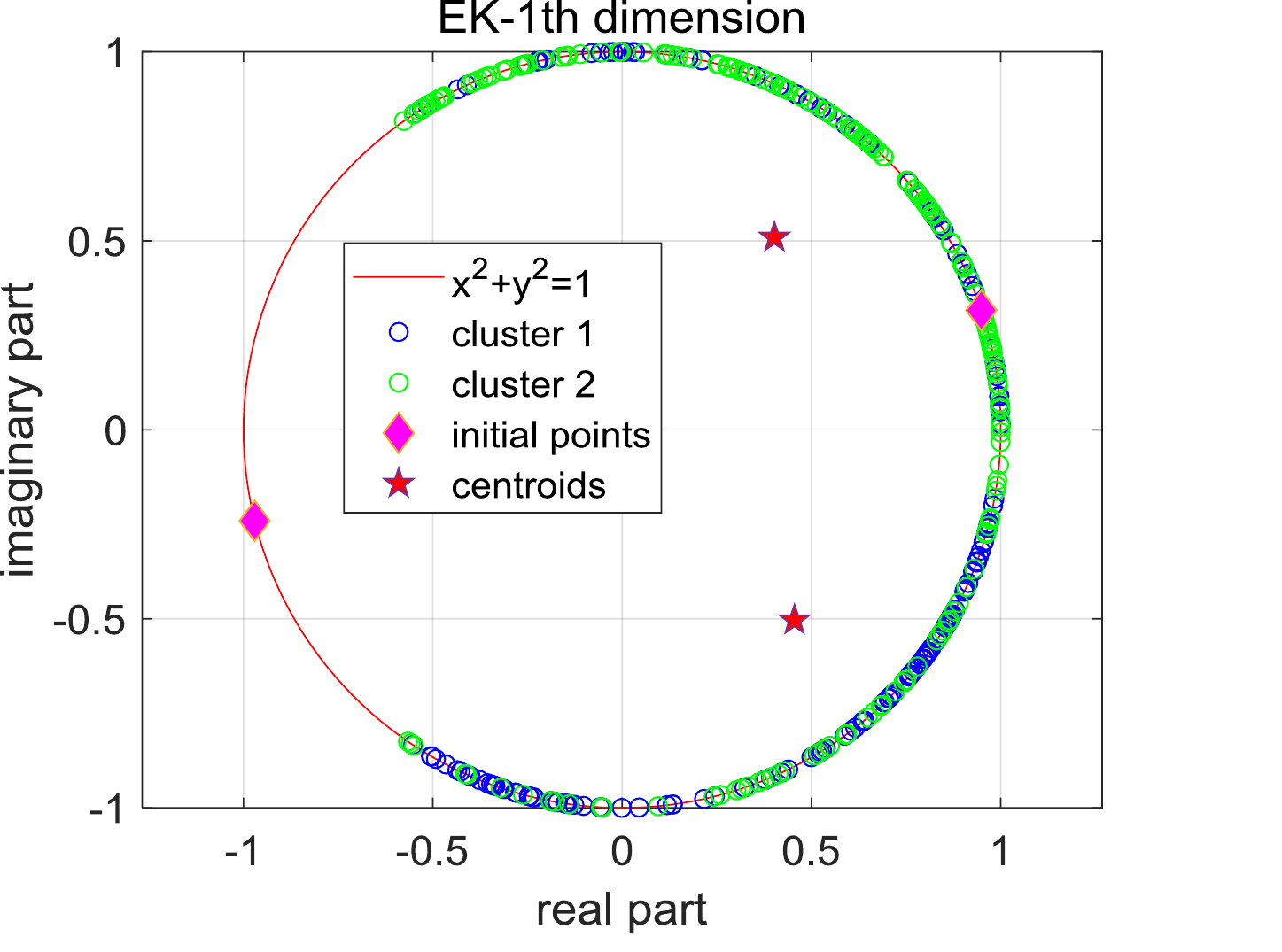}
\label{Halfmoon1DREK20}}
\quad
\subfigure[]{%
\includegraphics[width=5.55cm]{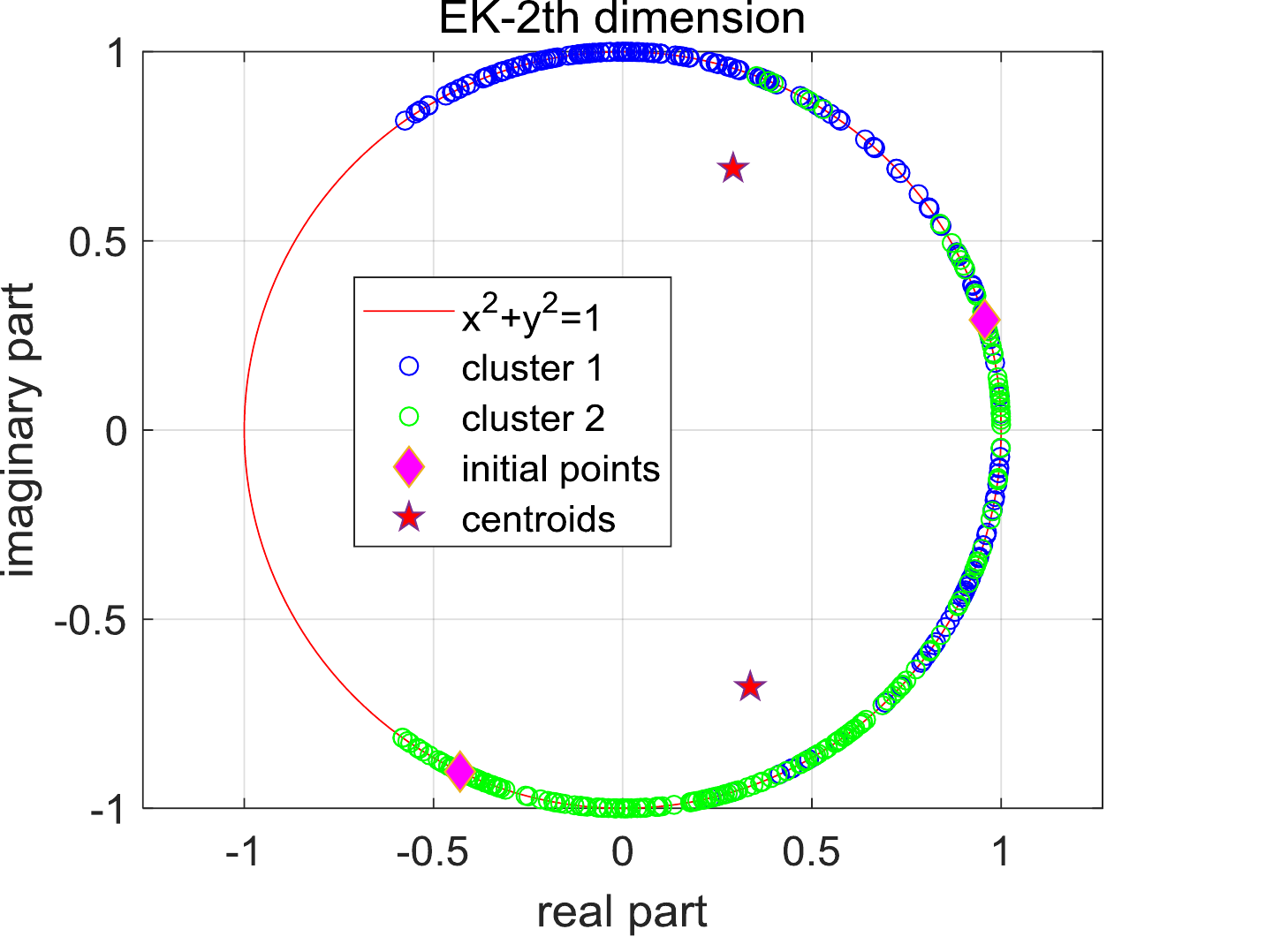}
\label{Halfmoon2DEK0}}
\subfigure[]{%
\includegraphics[width=5.55cm]{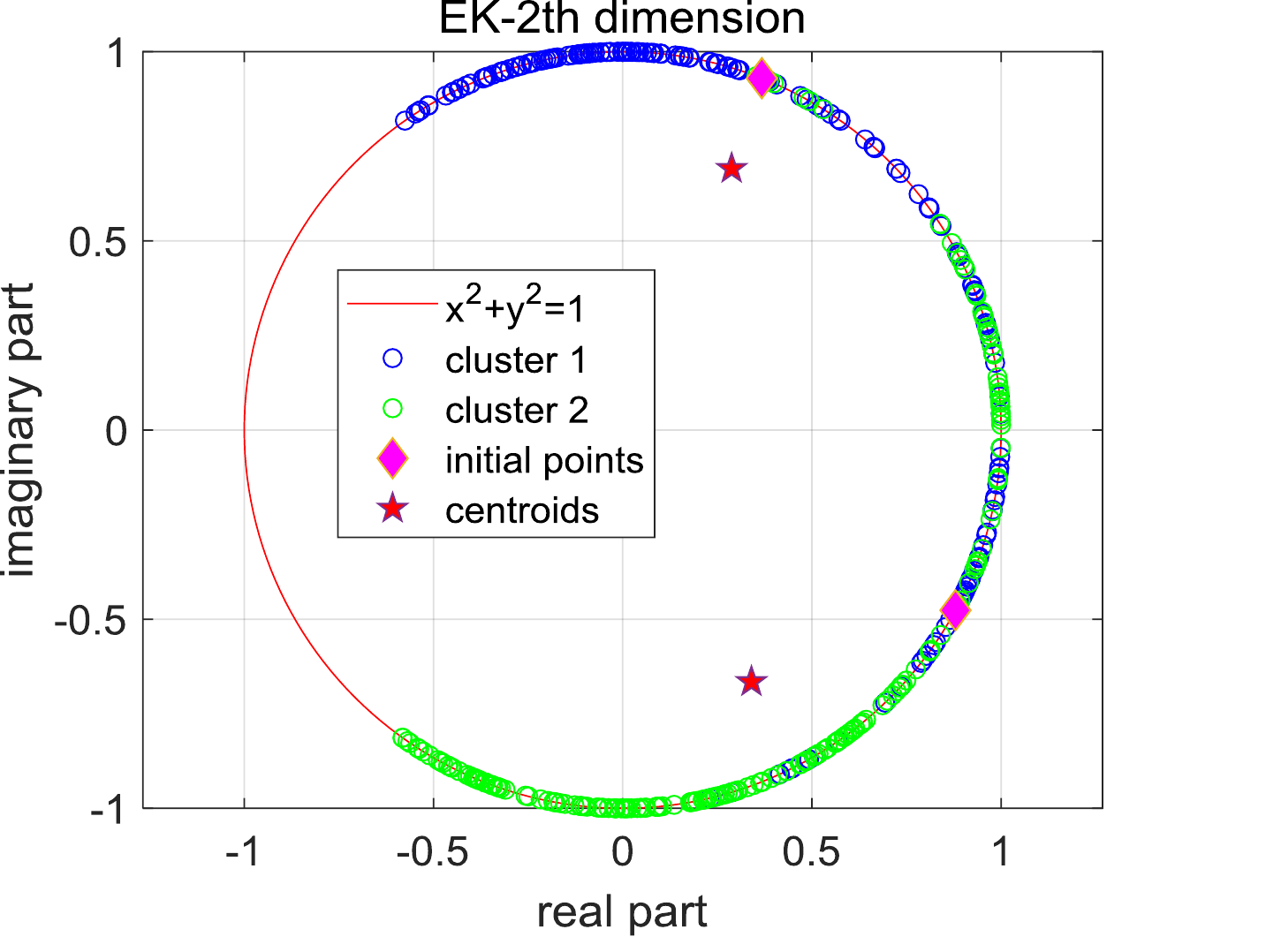}
\label{Halfmoon2DREK10}}
\subfigure[]{%
\includegraphics[width=5.55cm]{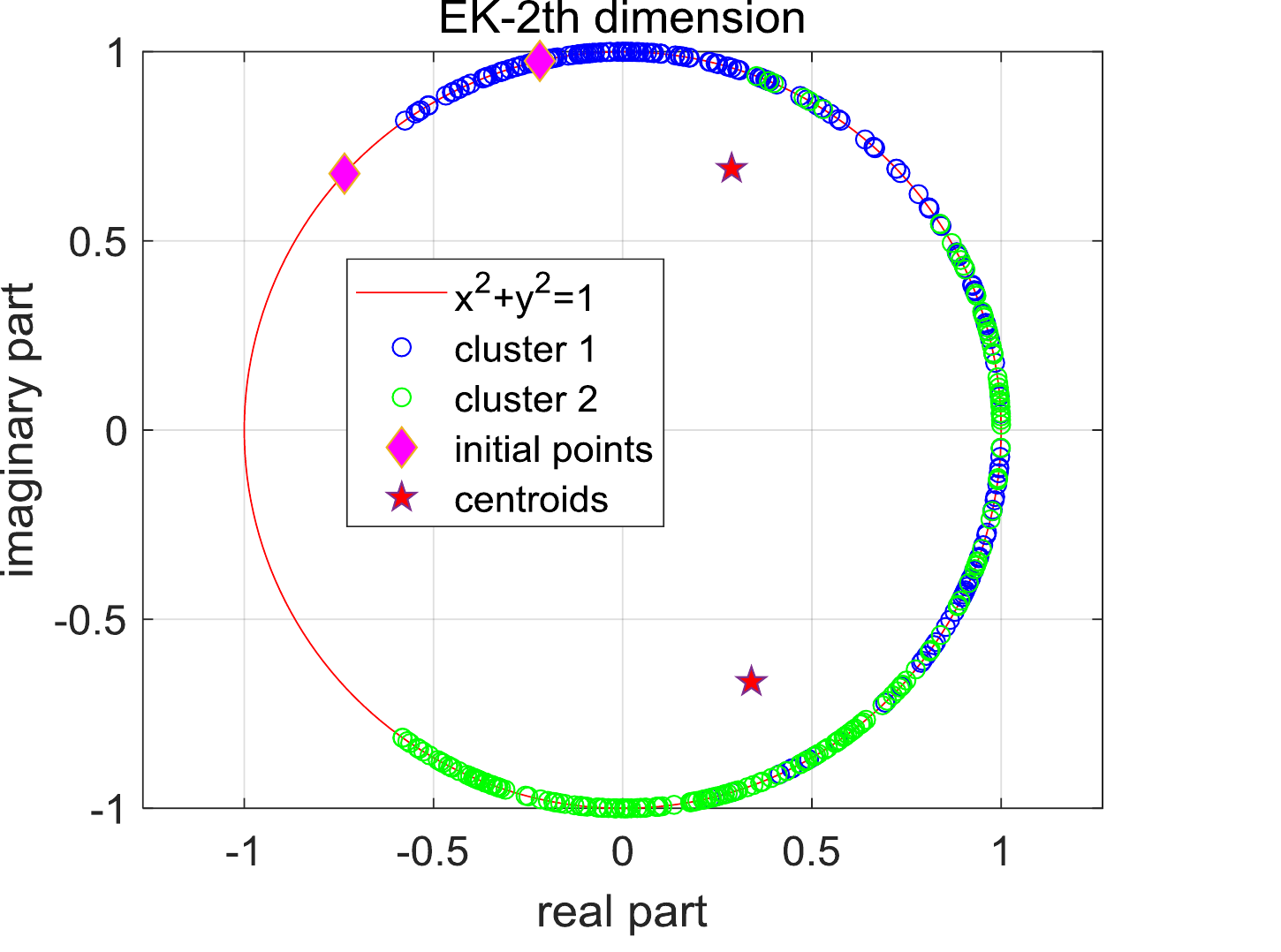}
\label{Halfmoon2DREK20}}
\caption{Centroids obtained by EulerK over 1th initialization. Because it is difficult to visualize the data points in 2-dimensional complex space, we show the centroids over separate dimensions.}
\label{SimulationResultIni1}
\end{figure*}

\begin{figure*}
\centering
\subfigure[]{%
\includegraphics[width=5.55cm]{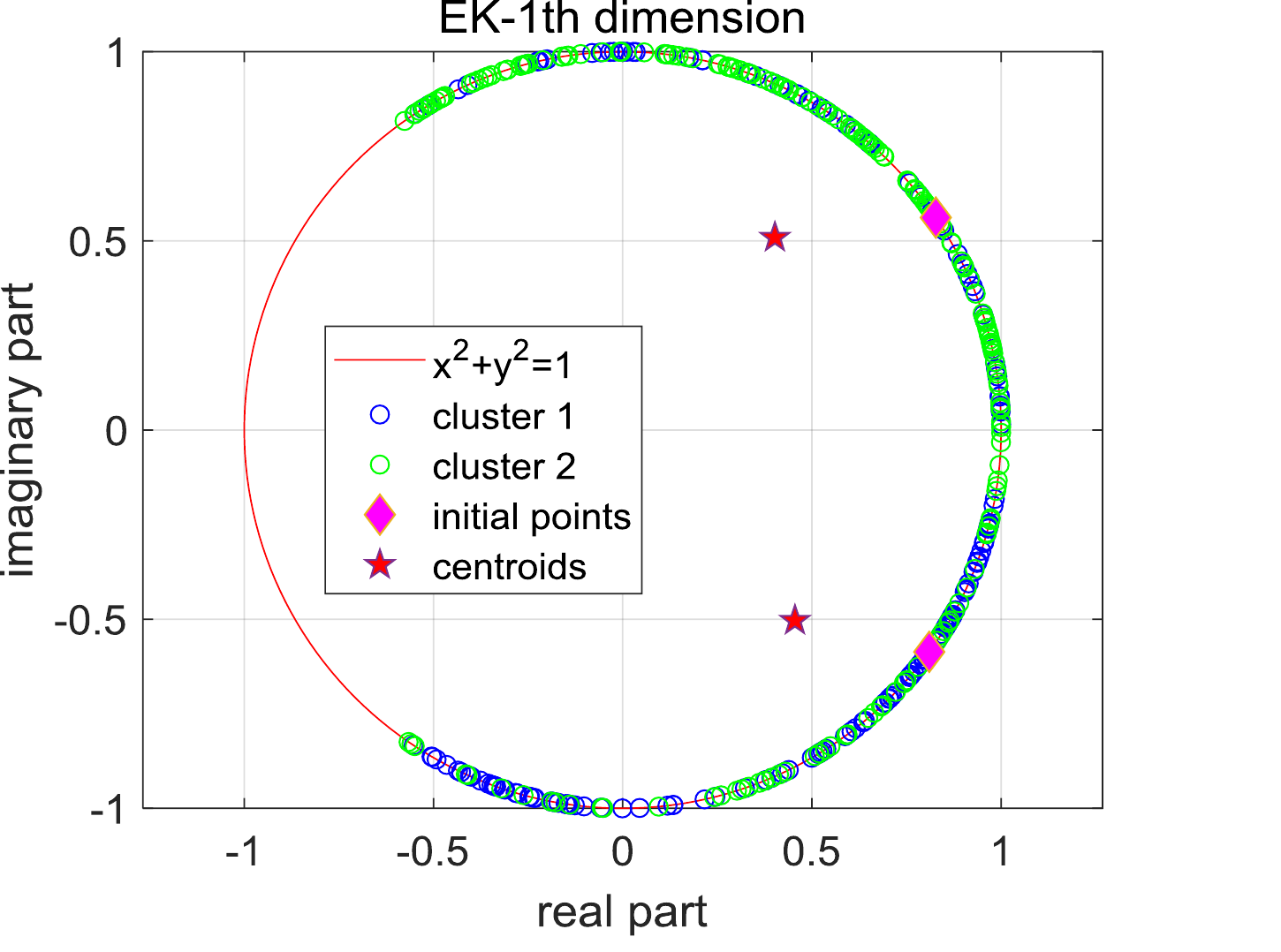}
\label{Halfmoon1DEK00}}
\subfigure[]{%
\includegraphics[width=5.55cm]{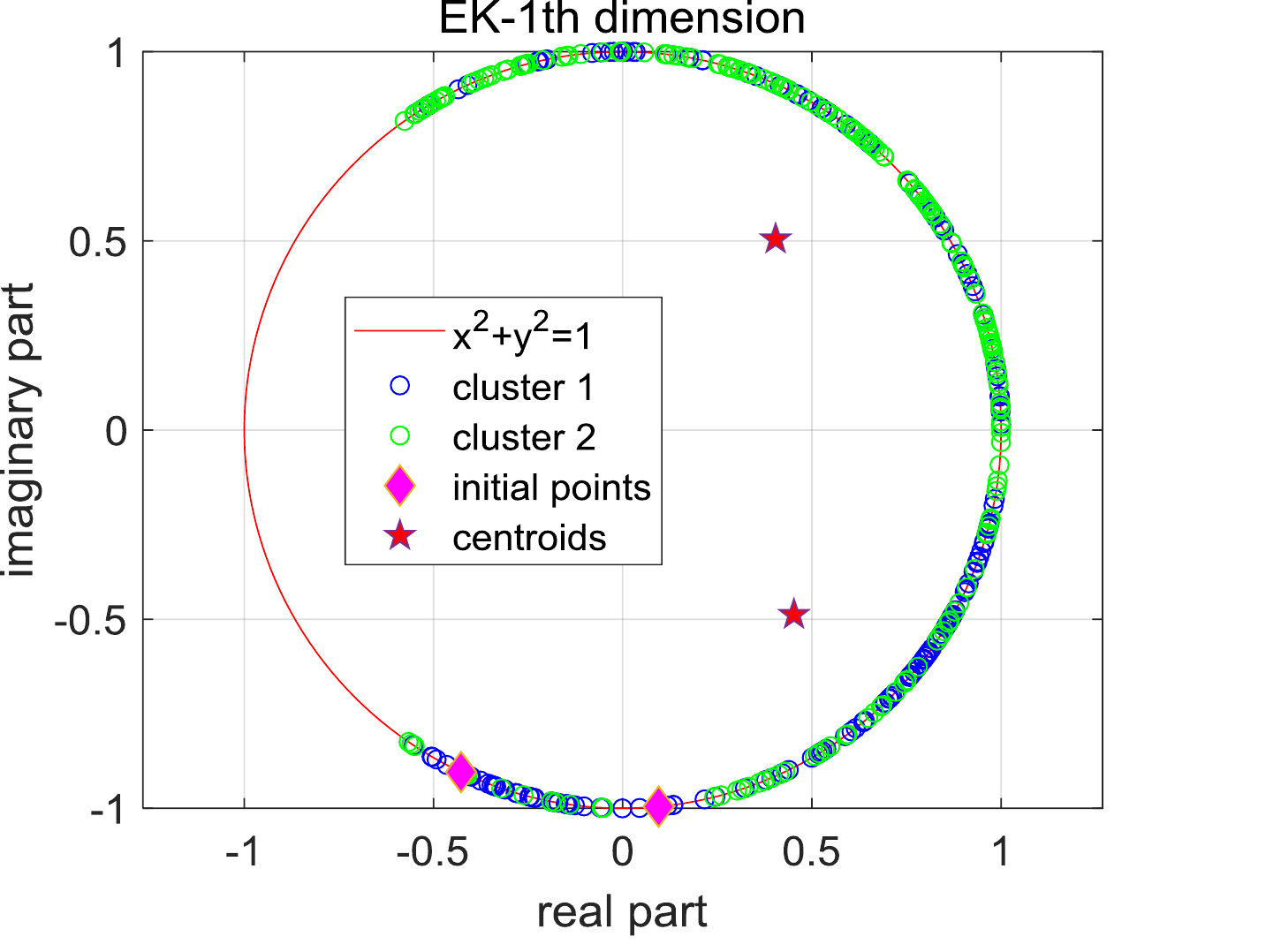}
\label{Halfmoon1DREK100}}
\subfigure[]{%
\includegraphics[width=5.55cm]{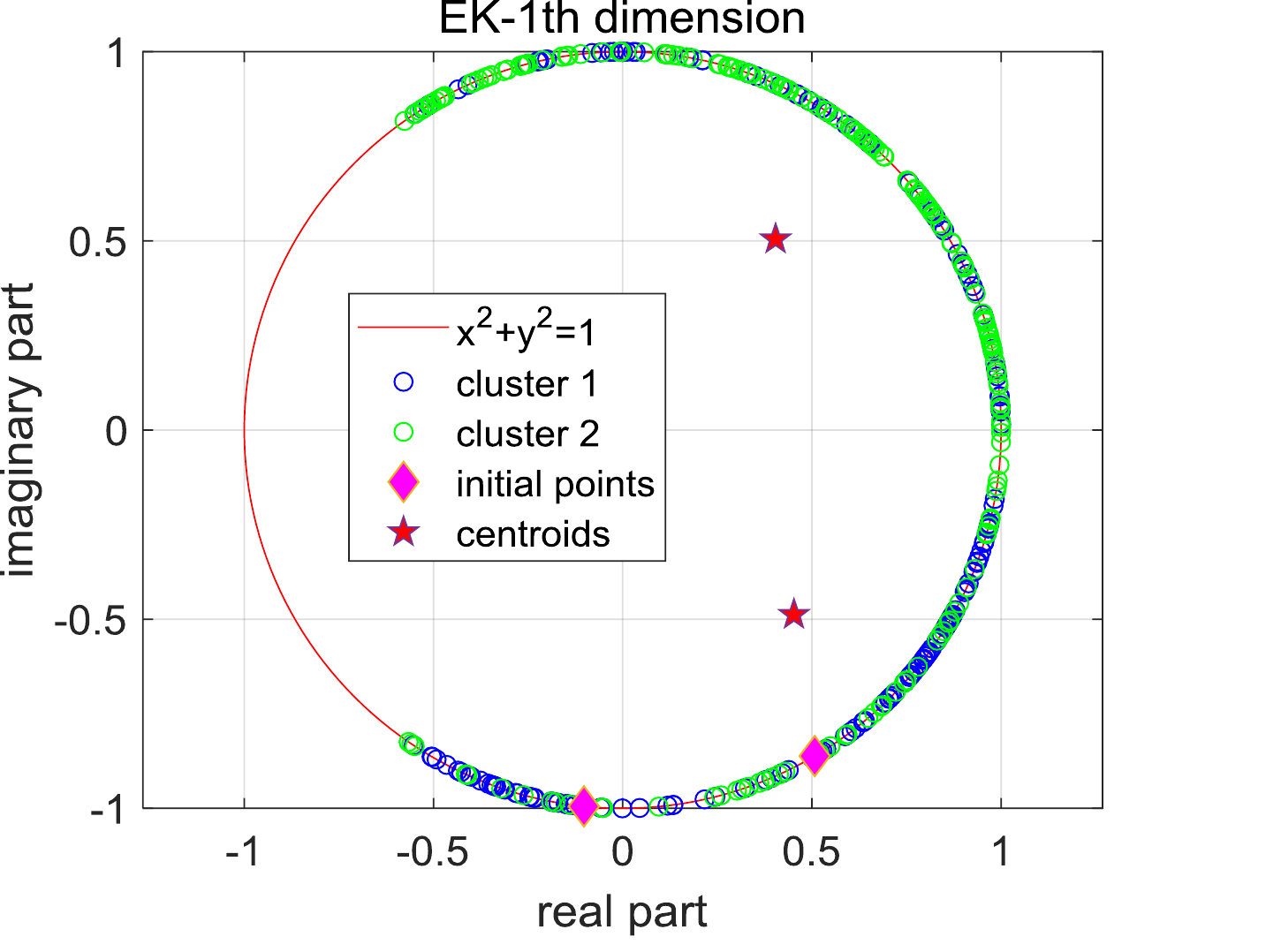}
\label{Halfmoon1DREK200}}
\quad
\subfigure[]{%
\includegraphics[width=5.55cm]{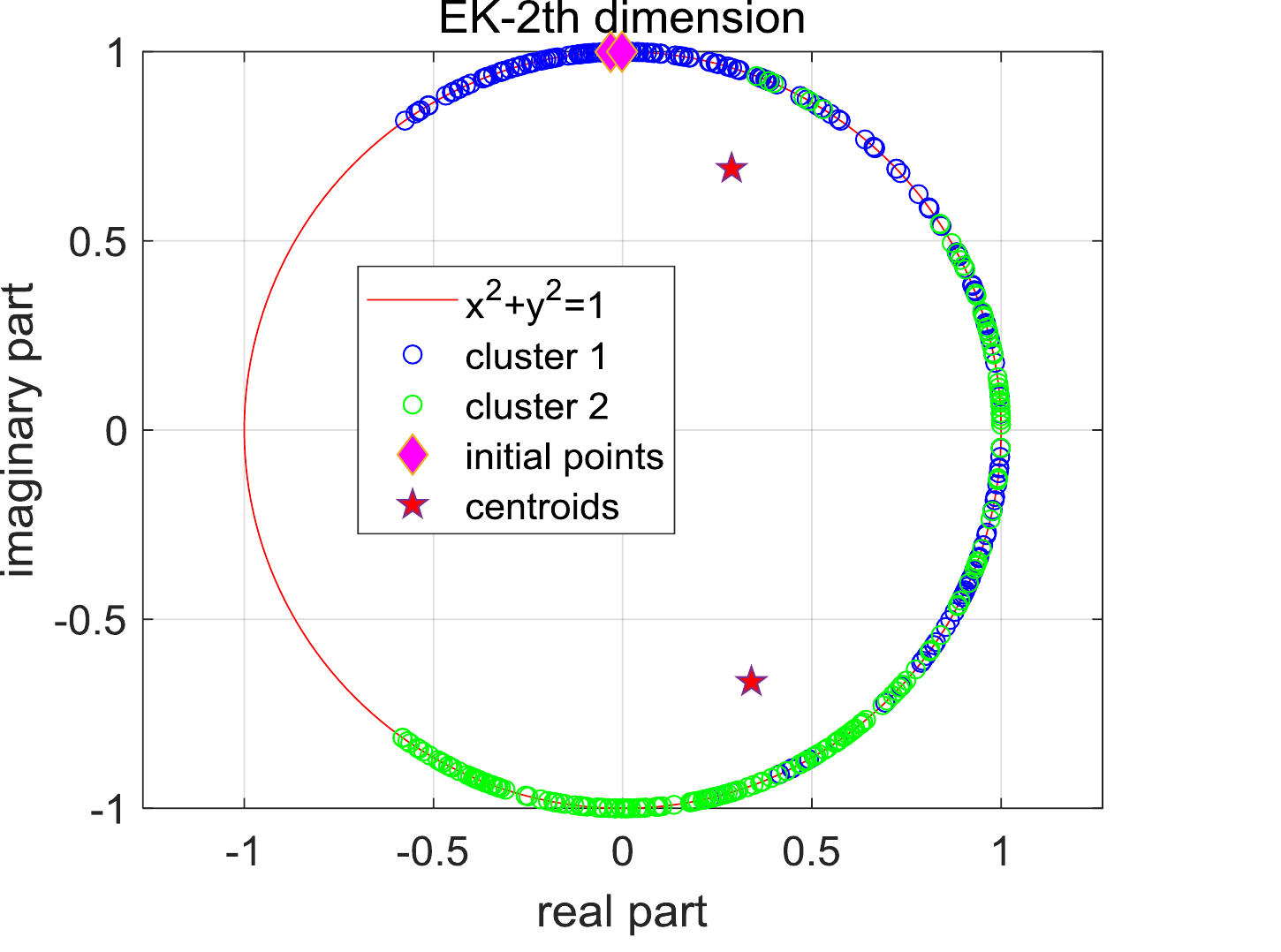}
\label{Halfmoon2DEK00}}
\subfigure[]{%
\includegraphics[width=5.55cm]{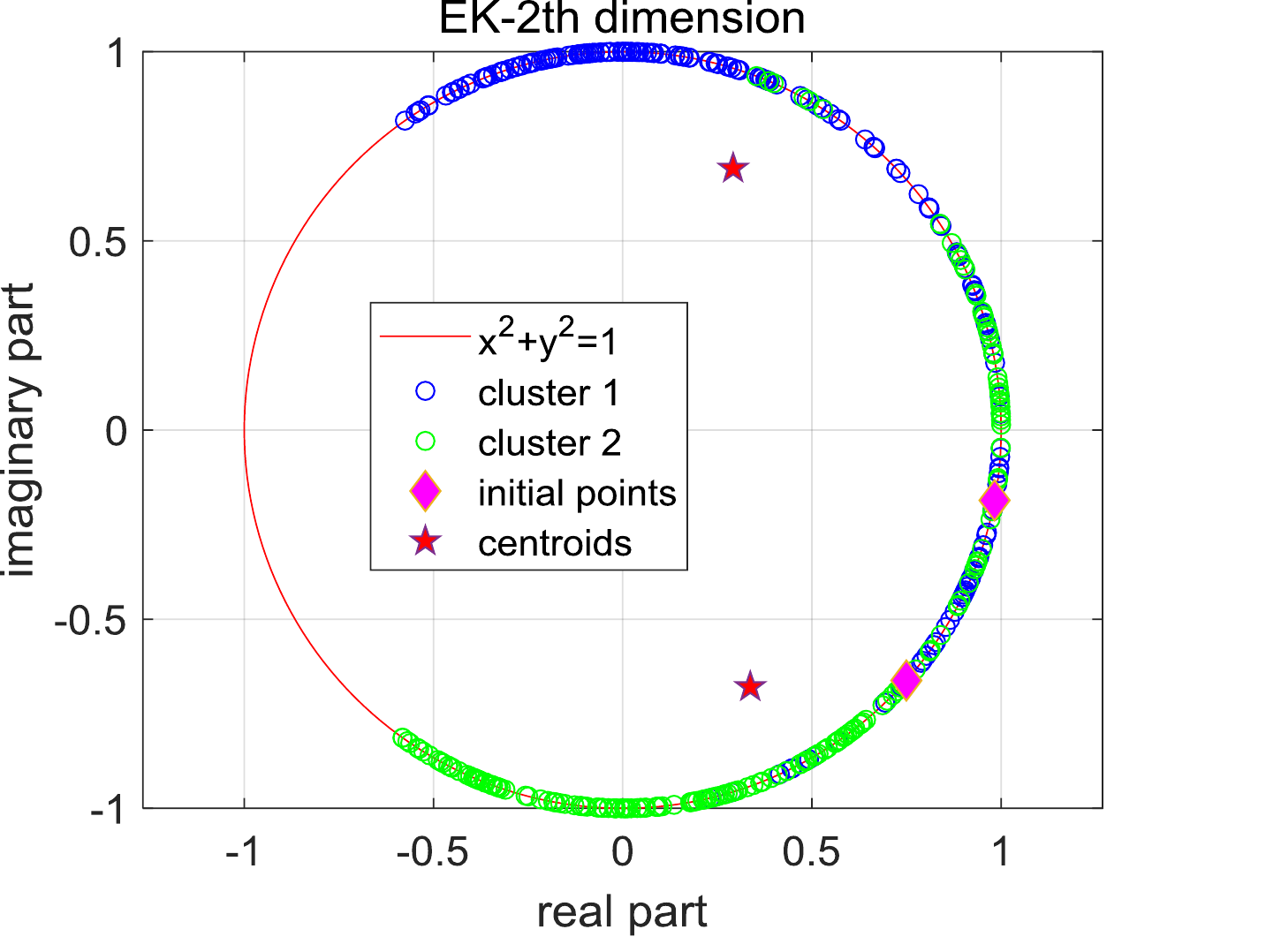}
\label{Halfmoon2DREK100}}
\subfigure[]{%
\includegraphics[width=5.55cm]{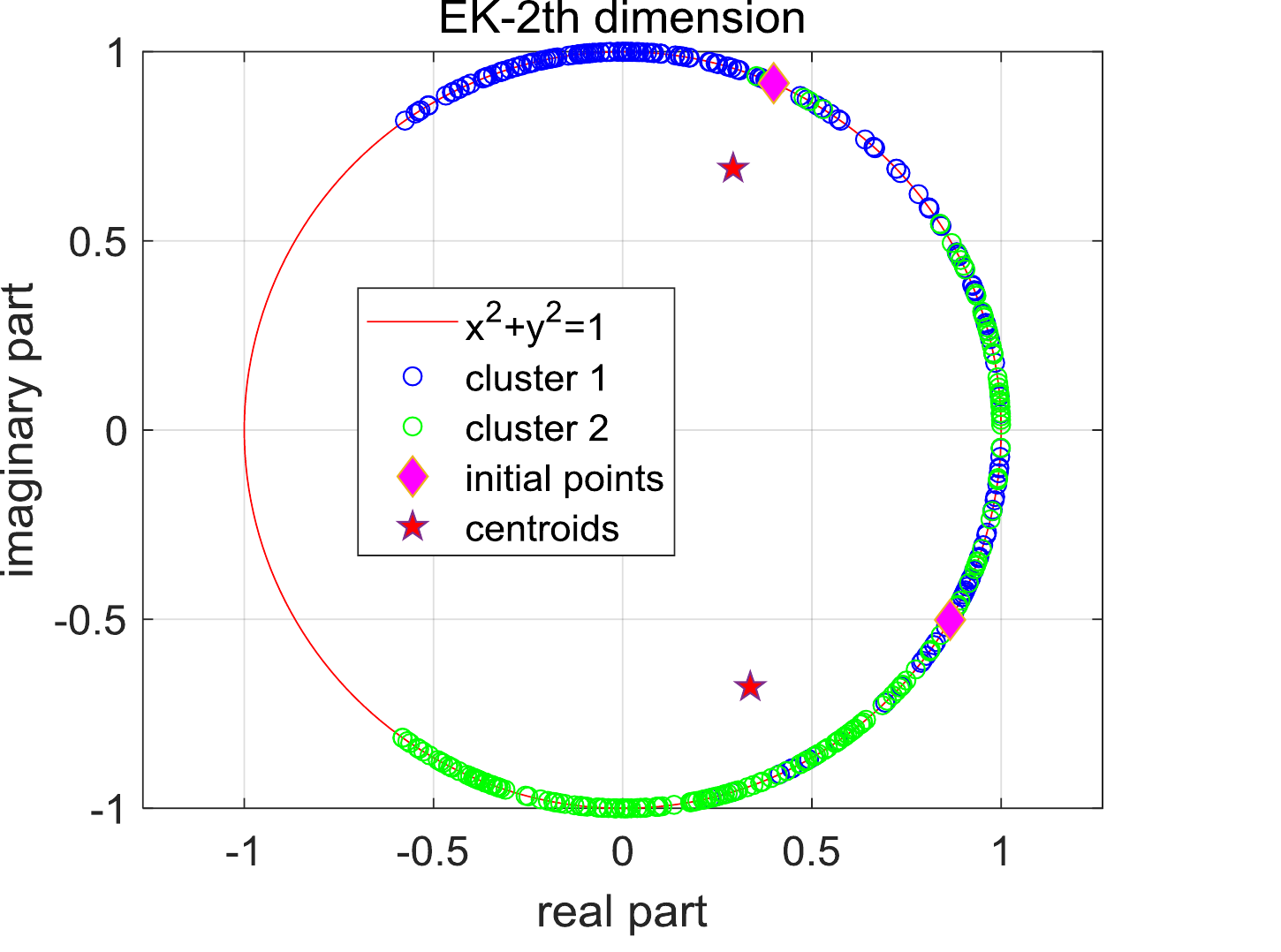}
\label{Halfmoon2DREK200}}
\caption{Centroids obtained by EulerK over 2th initialization. Because it is difficult to visualize the data points in 2-dimensional complex space, we show the centroids over separate dimensions.}
\label{SimulationResultIni2}
\end{figure*}

\section{Experiments}
In this section, we validate the effectiveness of our proposed REK1 and REK2 over one synthetic dataset and eleven real datasets, respectively.
\subsection{Simulation Study}
In this part, we give a synthetic example to intuitively visualize the interesting phenomenon occuring in EulerK and illustrate the effectiveness of our proposed REK1 and REK2. To achieve such goal, we first randomly generate a 2-dimensional synthetic data set, i.e., Halfmoon data which consists of 2 clusters and show it in Figure \ref{HalfmoonData}.
\subsubsection{Deviation Degree and Centroids}
We perform EulerK, REK1 and REK2 on such dataset, respectively and next show the mapped data in Figure \ref{Map_Halfmoon}, centroids obtained by these three methods and the final partition in Figure \ref{SimulationResult}, respectively. Because it is difficult to visualize the data points in 2-dimensional complex space, we show the above items over separate dimensions. From Figure \ref{Halfmoon1DEK} and \ref{Halfmoon2DEK}, we can see that the centroids obtained by EulerK indeed deviate from the hyper-sphere surface with a large margin, i.e., not residing in the support domain of mapped data points and in some sense, actually are outliers. To quantitatively measure such deviated degree, we define a criterion named \emph{Deviation Degree} $\kappa$ based on the distance between each centroid $\mathbf{m}=\mathbf{a}+i\mathbf{b}$ and the unit hyper-sphere surface and formulate it as
\begin{equation}
\kappa = 1-\frac{\sum_{i=1}^{d}\mathbf{a}_{i}^2+\mathbf{b}_{i}^2}{\sqrt{d}}.
\end{equation}
Obviously, the larger the value of $\kappa$ is, the higher the deviation of centroids to unit hyper-sphere surface is. For the centroids obtained by EulerK on this dataset, $\kappa = 0.298$ quantitatively reflects their large degree of deviation to the distribution of mapped data.
Next, Figure \ref{Halfmoon1DREK1}\ref{Halfmoon2DREK1} and \ref{Halfmoon1DREK2}\ref{Halfmoon2DREK2} indicate that prototypes respectively acquired by REK1 and REK2 strictly reside on the mapped space and almost reveal the distribution of the mapped data points. Moreover, given the centroids, we calculate the classification surfaces of EulerK, REK1, REK2 respectively according to \textbf{Lemma} \ref{lemma1}. and \ref{lemma2}., respectively and show them in Figure \ref{SimulationResult} dimension by dimension. From Figure \ref{SimulationResult}, we can see that the classification surfaces corresponding to REK1 and REK2 perform better than that corresponding to EulerK, so we can conclude that the classification surfaces of REK1 and REK2 have better generalization ability than that corresponding to EulerK. This further reinforces the rationality of our questioning of this phenomenon occuring in EulerK and validate the effectiveness of our proposed REK1 and REK2.

\begin{lemma} \label{lemma1}
Given two centroids $\mathbf{m}_p=\mathbf{a}_{p}+i\mathbf{b}_p$ and $\mathbf{m}_{q}=\mathbf{a}_{q}+i\mathbf{b}_q$ in $d$-dimensional complex space, the classification surface of $p$-th cluster and $q$-th cluster corresponding to EulerK can be formulated as
\begin{equation}
\begin{aligned}
&\sum_{l=1}^{d}(\mathbf{a}_{p,l}-\mathbf{a}_{q,l})\mathbf{a}_{l}-(\mathbf{a}_{p,l}^2-\mathbf{a}_{q,l}^2)\\
&+\sum_{l=1}^{d}(\mathbf{b}_{p,l}-\mathbf{b}_{q,l})\mathbf{b}_{l}-(\mathbf{b}_{p,l}^2-\mathbf{b}_{q,l}^2)=0.
\end{aligned}
\end{equation}
\end{lemma}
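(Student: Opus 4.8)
The plan is to characterize the classification surface between the $p$-th and $q$-th clusters as the \emph{equidistant locus} induced by the $k$-means assignment rule. Under EulerK, a mapped point $\phi(x)$ is assigned to cluster $p$ in preference to cluster $q$ exactly when it is strictly closer to $\mathbf{m}_p$ than to $\mathbf{m}_q$; hence the boundary separating the two clusters is precisely the set of points satisfying $||\phi(x)-\mathbf{m}_{p}||_{2}^{2}=||\phi(x)-\mathbf{m}_{q}||_{2}^{2}$. Establishing the lemma therefore reduces to expanding this single scalar equality and rewriting it in the claimed linear-plus-quadratic form.

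First I would substitute the EulerK squared-distance expression \eqref{EulerKmeansDis} into each side, writing $||\phi(x)-\mathbf{m}_{c}||_{2}^{2}=\frac{d}{2}+||\mathbf{m}_c||^2-\cos(\alpha\pi x)^T\mathbf{a}_c-\sin(\alpha\pi x)^T\mathbf{b}_c$ for $c\in\{p,q\}$. Equating the $p$- and $q$-versions cancels the common constant $\frac{d}{2}$ at once, leaving only the centroid-norm terms $||\mathbf{m}_p||^2-||\mathbf{m}_q||^2$ and the two inner-product terms. I would then expand $||\mathbf{m}_c||^2=\sum_{l=1}^d(\mathbf{a}_{c,l}^2+\mathbf{b}_{c,l}^2)$ coordinatewise, and denote by $\mathbf{a}_l=\cos(\alpha\pi x_l)$ and $\mathbf{b}_l=\sin(\alpha\pi x_l)$ the real and imaginary coordinates of $\phi(x)$, which are exactly the free variables tracing out the surface.

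Collecting terms dimension by dimension and multiplying through by $-1$ groups the coefficients of $\mathbf{a}_l$ and $\mathbf{b}_l$ against the differences of squared centroid coordinates, producing $\sum_{l=1}^d[(\mathbf{a}_{p,l}-\mathbf{a}_{q,l})\mathbf{a}_l-(\mathbf{a}_{p,l}^2-\mathbf{a}_{q,l}^2)]+\sum_{l=1}^d[(\mathbf{b}_{p,l}-\mathbf{b}_{q,l})\mathbf{b}_l-(\mathbf{b}_{p,l}^2-\mathbf{b}_{q,l}^2)]=0$, which is the asserted equation of the classification surface.

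The derivation is essentially routine algebra once the equidistant-locus viewpoint is adopted, so I anticipate no genuine mathematical obstacle; the one place requiring care is the bookkeeping of notation and normalization. Specifically, I would drive the computation through the precomputed distance formula \eqref{EulerKmeansDis} rather than re-expanding the complex Euclidean norm from scratch: doing so automatically yields coefficient $1$ on both the cross terms $(\mathbf{a}_{p,l}-\mathbf{a}_{q,l})\mathbf{a}_l$ and the norm-difference terms $(\mathbf{a}_{p,l}^2-\mathbf{a}_{q,l}^2)$, whereas a raw expansion would introduce spurious factors of $2$ (and of the $\tfrac{1}{\sqrt{2}}$ mapping scale) that must then be reconciled with the $\mathbf{a}_c,\mathbf{b}_c$ convention already baked into \eqref{EulerKmeansDis}. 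Ensuring this consistency is what makes the stated surface equation emerge cleanly rather than only up to an overall scalar.
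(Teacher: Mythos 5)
Your proposal is correct and follows essentially the same route as the paper: both characterize the boundary as the equidistant locus $||\phi(x)-\mathbf{m}_p||^2=||\phi(x)-\mathbf{m}_q||^2$ and expand coordinatewise to collect the linear and quadratic terms. The only (minor) divergence is that the paper expands the raw sums of squares $\sum_l(\mathbf{a}_l-\mathbf{a}_{c,l})^2+(\mathbf{b}_l-\mathbf{b}_{c,l})^2$ directly, which strictly speaking leaves a factor of $2$ on the cross terms to be reconciled with the stated coefficients, whereas your decision to drive the computation through the precomputed distance \eqref{EulerKmeansDis} sidesteps exactly that normalization issue and lands on the lemma's equation as written.
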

\begin{proof}
Given a data point $x=\mathbf{a}+i\mathbf{b}$ residing on the unit hyper-sphere surface of $d$-dimensional complex space, let $||x-\mathbf{m}_p||^2=||x-\mathbf{m}_q||^2$, we have $\sum_{l=1}^{d}(\mathbf{a}_l-\mathbf{a}_{p,l})^2+\sum_{l=1}^{d}(\mathbf{b}_l-\mathbf{b}_{p,l})^2=\sum_{l=1}^{d}(\mathbf{a}_l-\mathbf{a}_{q,l})^2+\sum_{l=1}^{d}(\mathbf{b}_l-\mathbf{b}_{q,l})^2$
, unfolding this equation, we then get the classification surface formula
$\sum_{l=1}^{d}(\mathbf{a}_{p,l}-\mathbf{a}_{q,l})\mathbf{a}_{l}-(\mathbf{a}_{p,l}^2-\mathbf{a}_{q,l}^2)
+\sum_{l=1}^{d}(\mathbf{b}_{p,l}-\mathbf{b}_{q,l})\mathbf{b}_{l}-(\mathbf{b}_{p,l}^2-\mathbf{b}_{q,l}^2)=0$.
\end{proof}

\begin{lemma} \label{lemma2}
Given two cluster centroids $\mathbf{m}_p=\mathbf{a}_{p}+i\mathbf{b}_p$ and $\mathbf{m}_{q}=\mathbf{a}_{q}+i\mathbf{b}_q$ in $d$-dimensional complex space, the classification surfaces of $p$th cluster and $q$th cluster corresponding to REK1 and REK2 can be formulated as
\begin{equation}
\sum_{l=1}^{d}(\mathbf{a}_{p,l}-\mathbf{a}_{q,l})\mathbf{a}_{l}+
\sum_{l=1}^{d}(\mathbf{b}_{p,l}-\mathbf{b}_{q,l})\mathbf{b}_{l}=0.
\end{equation}
\end{lemma}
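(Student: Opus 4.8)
The plan is to mimic the proof of Lemma~\ref{lemma1} and then exploit the extra structure that REK1 and REK2 place on their centroids. First I would take an arbitrary point $x=\mathbf{a}+i\mathbf{b}$ lying on the unit hyper-sphere surface and write down the equidistance condition $||x-\mathbf{m}_p||^2=||x-\mathbf{m}_q||^2$ that defines the boundary between the $p$-th and $q$-th clusters. Expanding both sides in real coordinates and cancelling the common term $\sum_l(\mathbf{a}_l^2+\mathbf{b}_l^2)$ yields, exactly as in Lemma~\ref{lemma1}, a linear part $\sum_l(\mathbf{a}_{p,l}-\mathbf{a}_{q,l})\mathbf{a}_l+\sum_l(\mathbf{b}_{p,l}-\mathbf{b}_{q,l})\mathbf{b}_l$ together with a quadratic part $\sum_l(\mathbf{a}_{p,l}^2-\mathbf{a}_{q,l}^2)+\sum_l(\mathbf{b}_{p,l}^2-\mathbf{b}_{q,l}^2)=||\mathbf{m}_p||^2-||\mathbf{m}_q||^2$.

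The decisive step is to observe that for REK1 and REK2 the two centroids have equal norm, so this quadratic part vanishes. For REK1 this follows directly from the imposed constraint $\mathbf{a}_{c,l}^2+\mathbf{b}_{c,l}^2=1$ in (\ref{REK1}), which gives $||\mathbf{m}_p||^2=||\mathbf{m}_q||^2=d$ after summing over the $d$ dimensions. For REK2 each centroid is the image $\mathbf{m}_c=\tfrac{1}{\sqrt{2}}e^{i\mathbf{u}_c}$ of a pre-image, so $\mathbf{a}_{c,l}^2+\mathbf{b}_{c,l}^2=\tfrac{1}{2}(\cos^2\mathbf{u}_{c,l}+\sin^2\mathbf{u}_{c,l})=\tfrac{1}{2}$ for every $l$, whence $||\mathbf{m}_p||^2=||\mathbf{m}_q||^2=d/2$. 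In both cases the per-dimension squared length is a constant shared by $p$ and $q$, so the quadratic terms cancel dimension by dimension.

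Substituting $||\mathbf{m}_p||^2-||\mathbf{m}_q||^2=0$ back into the expanded equidistance condition removes the quadratic contribution and leaves precisely $\sum_{l=1}^d(\mathbf{a}_{p,l}-\mathbf{a}_{q,l})\mathbf{a}_l+\sum_{l=1}^d(\mathbf{b}_{p,l}-\mathbf{b}_{q,l})\mathbf{b}_l=0$, the asserted classification surface. I expect no real obstacle here, since the computation is routine and essentially identical to Lemma~\ref{lemma1}. The only point demanding care is to verify the equal-norm property separately for the two methods, because REK1 enforces it through the explicit Lagrangian constraint while REK2 obtains it automatically from the exponential parametrization of the centroid; once the norms are seen to coincide, the degeneration of the surface from a quadric to a hyperplane is immediate.
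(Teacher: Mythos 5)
Your proposal is correct and follows essentially the same route as the paper's own proof: expand the equidistance condition $||x-\mathbf{m}_p||^2=||x-\mathbf{m}_q||^2$ and use the fact that each centroid has constant per-dimension squared modulus so that the quadratic terms $||\mathbf{m}_p||^2-||\mathbf{m}_q||^2$ cancel, leaving the linear surface. You are in fact slightly more careful than the paper, which only invokes the constraint $\mathbf{a}_l^2+\mathbf{b}_l^2=1$ and leaves the REK2 case (where the exponential parametrization $\mathbf{m}_c=\tfrac{1}{\sqrt{2}}e^{i\mathbf{u}_c}$ gives the constant $\tfrac{1}{2}$ per dimension) implicit.
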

\begin{proof}
Given a data point $x=\mathbf{a}+i\mathbf{b}$ residing on the unit hyper-sphere surface of $d$-dimensional complex space, let $||x-\mathbf{m}_p||^2=||x-\mathbf{m}_q||^2$, we have $\sum_{l=1}^{d}(\mathbf{a}_l-\mathbf{a}_{p,l})^2+\sum_{l=1}^{d}(\mathbf{b}_l-\mathbf{b}_{p,l})^2=\sum_{l=1}^{d}(\mathbf{a}_l-\mathbf{a}_{q,l})^2+\sum_{l=1}^{d}(\mathbf{b}_l-\mathbf{b}_{q,l})^2$
. Unfolding this equation and utilizing the constraints of $\mathbf{a}_l^2+\mathbf{b}_l^2=1$ , we then get the classification surface formula
$\sum_{l=1}^{d}(\mathbf{a}_{p,l}-\mathbf{a}_{q,l})\mathbf{a}_{l}+
\sum_{l=1}^{d}(\mathbf{b}_{p,l}-\mathbf{b}_{q,l})\mathbf{b}_{l}=0$.
\end{proof}

\subsubsection{Initialization on Hypersphere Surface}
In this part, we intend to verify that whether EulerK still obtains the outlier-like centroids if we initialize the centroids on the support domain of mapped data. To this end, we set up two different initialization schemes including randomly choosing $K$ data points on the unit hyper-sphere surface as initial centroids and randomly choosing $K$ mapped data points as the initial centroids. We implement EulerK on Halfmoon dataset over the above two different initialization settings where each setting corresponds to three random choice and show the acquired centroids in Figure \ref{SimulationResultIni1} and \ref{SimulationResultIni2}, respectively. Figure \ref{SimulationResultIni1} and \ref{SimulationResultIni2} show that EulerK indeed still acquire the centroids that deviate from the support domain, thus further verifying our doubts about reasonableness of utilizing such centroids to partition the mapped data.

\subsubsection{Different $K$}
In this part, we show the relationship between $\kappa$ and centroids obtained by EulerK over different $K$ in Figure \ref{DeviKHalfmoonData}. Figure \ref{DeviKHalfmoonData} illustrates that $\kappa$ decreases as $K$ increases, this phenomenon shows that EulerK obtains outlier-like centroids especially over small number of centroids and also reflect a fact that it actually needs more centroids to represent the nonlinear dataset.
\begin{figure}
 \centering
 \includegraphics[width=7.5cm]{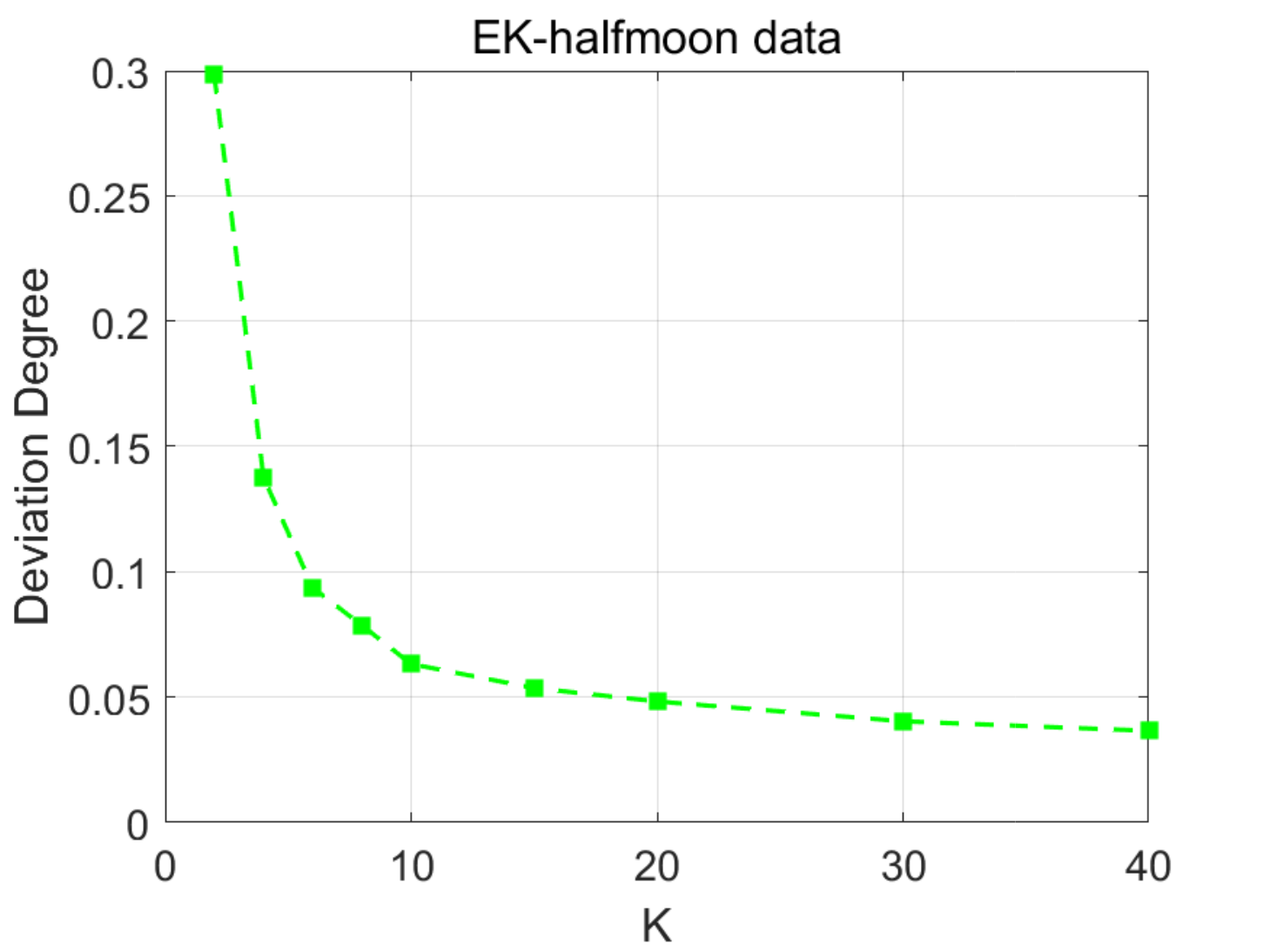}
 \caption{Half-moon data.}
 \label{DeviKHalfmoonData}
 \end{figure}

\subsection{Experimental Settings}
\subsubsection{Datasets}
Because we do not focus on the large-scale clustering problem, we choose six medium scale datasets used in \cite{EulerKmeans2}. Moreover, we choose several real datasets commonly used in kernel based methods. In summary, we evaluate our proposed methods REK1 and REK2 on eleven public datasets including low-dimensional ones of
wine\footnote[1]{\url{https://archive.ics.uci.edu/ml/datasets/Wine}},
seeds \footnote[2]{\url{http://archive.ics.uci.edu/ml/datasets/seeds}},
yeast \footnote[3]{\url{http://archive.ics.uci.edu/ml/datasets/yeast}},
isolet\footnote[4]{\url{http://www.cad.zju.edu.cn/home/dengcai/Data/MLData.html}},
pendigits\footnote[5]{\url{http://odds.cs.stonybrook.edu/pendigits-dataset/}},
satimage\footnote[6]{\url{https://archive.ics.uci.edu/ml/datasets/Statlog+(Landsat+Satellite)}} and high-dimensional ones of Event \cite{Event}, Scenes 13 \cite{Scene 13}, Caltech 101 \cite{Caltech 101}, SImagenet \cite{SImagenet}, Caltech 256 \cite{Caltech 256}. Table \ref{StatisticOfDatasets} summarizes the statistic information of these datasets and we detail them as follows:

\begin{itemize}
\item wine: There are 210 subjects describing wines, which are grown in the same region but derived from three different cultivars.
\item seeds: This dataset comprised kernels belonging to three different varieties of wheat: Kama, Rosa and Canadian, 70 elements each. Each data point covers seven geometric real-valued continuous parameters of wheat kernels: area, perimeter, compactness, length of kernel, width of kernel, asymmetry coefficient, length of kernel groove.
\item yeast: This dataset includes 1484 instances to predict the cellular localization sites of proteins.
\item isolet: This dataset contains 150 subjects who spoke the name of each letter of the alphabet twice.
\item pendigits: The original pendigits (Pen-Based Recognition of Handwritten Digits) dataset from UCI machine learning repository is a multiclass classification dataset having 16 integer attributes and 10 classes ('0'-'9'). The digit database is created by collecting 250 samples from 44 writers.
\item satimage: This database consists of the multi-spectral values of pixels in $3\times3$ neighborhood in a satellite image. The database is a tiny sub-area of a scene. Each line of data originates from a $3\times3$ square neighborhood of pixels completely contained within in the sub-area. Each line consists of the pixel values in the four spectral bands of each of the 9 pixels in such $3\times3$ neighborhood and a number indicating the corresponding label.
\item Event: This dataset consists of 8 sports event categories whose sizes vary from 137 to 250.
\item 13 Natural Scenes: This dataset contains 3859 images covering 13 classes of natural scenes. On average, each image has 250 $\times$ 300 pixels.
\item Caltech 101: This dataset includes 8677 images of objects covering 101 categories. There are about 40 to 800 pictures per category where each image is left-right aligned.
\item SImagenet: The whole Imagenet dataset contains about 1,260,000 images covering 1000 leaf synsets in a synset tree, in which each leaf synset represents a category of images. In our experiment, we construct a smaller dataset SImagenet by selecting 19,911 images from 12 synsets: manhole cover, daily, website, odometer, monarch butterfly, rapeseed, cliff dwelling, mountain, geyser, shoji, door, and villa.
\item Caltech 256: This original dataset consists of 30,607 images of objects from 256 categories and 1 background clutter. We discard the images in the clutter and selected the left images to form our dataset consisting of 29,780 images. Unlike Caltech 101, there more images in each category, and the images are not left-aligned in this dataset, therefore it is more difficult to identify.
\end{itemize}

\begin{table}
\caption{Statistics of Datasets}
\begin{tabular*}{8cm}{lllll}
\hline
Dataset &Size &Dimension &\#Classes &Data Type \\
\hline
wine & 178 & 13 & 3 & \\
seeds & 210 & 7 & 3 & \\
yeast & 1,484 & 17 & 10 & \\
isolet  & 1,559 & 617 & 26 &Audio \\
pendigits & 3,498 & 16 & 10 & \\
satimage & 6,435 & 36 & 7 & \\
\hline
Event & 1,579 & 10752 & 26 & Image \\
Scene 13 & 3,859 & 10752 & 13 & Image \\
Caltech 101 & 8,677 & 10752 & 101 & Image \\
SImagenet & 19,911 & 10752 & 12 & Image \\
Caltech 256 &29,780 & 10752 & 256 & Image \\

\hline
\end{tabular*}
\label{StatisticOfDatasets}
\end{table}

\subsubsection{Compared Method}
We compare our proposed REK1 and REK2 with Euler $k$-means \cite{EulerKmeans1}. Following \cite{EulerKmeans1}, we choose the parameter $\alpha$ from $\{1e-4, 0.001, 0.005, 0.01, 0.05, 0.1 : 0.1 : 2,  5, 10, 50, 100:1e2:900,1e3:1e3:1e4\}$ for both REK1 and REK2 over all datasets. For EulerK, on the one hand, we utilize the setting of $\alpha$ described in \cite{EulerKmeans1} over the high-dimensional datasets. On the other hand, we choose the parameter $\alpha$ from $\{1e-4, 0.001, 0.005, 0.01, 0.05, 0.1 : 0.1 : 2,  5, 10, 50, 100:1e2:900,1e3:1e3:1e4\}$ for low-dimensional datasets.
\subsubsection{Evaluation Metric}
Similar with EulerK, we utilize two widely used metrics including the clustering accuracy (ACC) \cite{AccNmi1}\cite{kernel kmeans 4} and Normalized Mutual Information (NMI) \cite{kernel kmeans 2}\cite{Nmi1}defined as follows to evaluate the clustering performance
\begin{equation}
\text{ACC} = \frac{1}{n}\sum_{i=1}^{n}\delta(p_{i}-\text{map}(q_{i})),
\end{equation}
where $p_{i}$ and $q_{i}$ are predicted label and ground truth label for $i$-th data point, respectively. Besides, $\delta(x,y)$ is an indicator function where $\delta(x,y)=1$ if $x=y$, $\delta(x,y)=0$ otherwise and map$(\cdot)$ permutes predicted labels to maximally match with the ground truth labels.
\begin{equation}
\text{NMI}(X;Y) = 2\frac{I(X;Y)}{H(X)+H(Y)},
\end{equation}
where $H(X)=-\sum_{i=1}^{n}p(x_i)logp(x_i)$, $I(X;Y)=\sum_{x}\sum_{y}p(x,y)log\frac{p(x,y)}{p(x)p(y)}$.
The values of both ACC and NMI lie in the range of $[0,1]$. The values 1 of both ACC and NMI indicates the perfect match with true distribution whereas 0 indicates perfect mismatch.
\subsection{Experiment Result}

\subsubsection{Deviation Degree}
We compute the \emph{Deviation Degree} $\kappa$ of centroids obtained by EulerK on all real datasets and show them in Table \ref{DeviationDegreeRealData}. Table \ref{DeviationDegreeRealData} illustrates that EulerK indeed acquires outlier-like centroids which deviate from the support domain of mapped data, thus they are not quite reasonable to reveal the true distribution. Moreover, the value of $\kappa$ is much larger over several low-dimensional datasets than that on high-dimensional ones, i.e., the centroids obtained by EulerK over the much high dimensional datasets are very closed to the unit hyper-sphere surface, we could conclude that it is the high dimension not the EulerK algorithm itself that brings the big performance improvement over these high-dimensional datasets.
\begin{table}[]
\centering
\caption{\emph{Deviation Degree} $\kappa$ of centroids obtained by EulerK on real datasets}
\label{DeviationDegreeRealData}
\begin{tabular}{c|c|c}
\hline
high/low dimension              & data set & deviation degree \\
\hline
\multirow{8}{*}{low dimension}  & wine         & 0.4209                    \\ \cline{2-3} 
                                & seeds        & 0.00070954                \\ \cline{2-3}
                                & yeast        & 0.0052002                  \\ \cline{2-3} 
                                & isolet       & 0.0091804                 \\ \cline{2-3}
                                & pendigits    & 0.19997                    \\ \cline{2-3}
                                & satimage     & 0.33827                   \\ \cline{2-3} 
\hline
\multirow{5}{*}{high dimension} & Event        & 0.044                \\ \cline{2-3}
                                & Scene 13     & 0.077278                \\ \cline{2-3}
                                & Caltech 101  & 0.082337                \\ \cline{2-3}
                                & SImagenet    & 0.060476                \\ \cline{2-3}
                                & Caltech 256  & 0.0887                \\ \cline{2-3}
\hline
\end{tabular}
\end{table}

\begin{table*}
\footnotesize
\centering
\caption{NMI (\%) of Different Methods on low-dimensional datasets}
    \label{tab:NMI_Low}
    \setlength{\tabcolsep}{4mm}{
    \begin{tabular}{|c|c|c|c|c|c|c|c|c|c|c|c|c|}
    \hline
    \multirow{2}{*}{method}&
    \multicolumn{2}{c|}{wine}&\multicolumn{2}{c|}{seeds}&\multicolumn{2}{c|}{yeast}&\multicolumn{2}{c|}{isolet}&\multicolumn{2}{c|}{pendigits}&\multicolumn{2}{c|}{satimage}\cr\cline{2-13}
    &mean&std&mean&std&mean&std&mean&std&mean&std&mean&std\cr\hline
    EulerK       &48.06 &0.85 &49.37 &1.58  &67.95 &0.03  &53.24 &0.01 &68.25 &0.04 &20.87 &0.02  \cr\hline
    REK1         &49.35 &0.77 &54.69 &1.80  &68.07 &0.07  &52.36 &0.00 &67.00 &0.02 &22.30 &0.03 \cr\hline
    REK2         &48.65 &0.49 &51.23 &2.15  &68.11 &0.02  &53.97 &0.00 &67.18 &0.02 &21.79 &0.02  \cr\hline
    \end{tabular}}
\end{table*}

\begin{table*}
\footnotesize
\centering
\caption{ACC (\%) of Different Methods on low-dimensional datasets}
    \label{tab:ACC_Low}
    \setlength{\tabcolsep}{4mm}{
    \begin{tabular}{|c|c|c|c|c|c|c|c|c|c|c|c|c|}
    \hline
    \multirow{2}{*}{method}&
    \multicolumn{2}{c|}{wine}&\multicolumn{2}{c|}{seeds}&\multicolumn{2}{c|}{yeast}&\multicolumn{2}{c|}{isolet}&\multicolumn{2}{c|}{pendigits}&\multicolumn{2}{c|}{satimage}\cr\cline{2-13}
    &mean&std&mean&std&mean&std&mean&std&mean&std&mean&std\cr\hline
    EulerK       &78.26 &0.62 &75.52 &1.23  &32.06 &0.04 &47.97 &0.10 &65.03 &0.13 &59.67 &0.15  \cr\hline
    REK1         &78.54 &0.45 &80.29 &1.24  &33.90 &0.07 &48.65 &0.19 &67.26 &0.11 &58.55 &0.10  \cr\hline
    REK2         &78.76 &1.02 &76.76 &1.71  &32.39 &0.06 &47.72 &0.09 &68.86 &0.15 &58.18 &0.11  \cr\hline

    \end{tabular}}
\end{table*}
\begin{table*}
\footnotesize
\centering
\caption{NMI (\%) of Different Methods on high-dimensional datasets}
    \label{tab:NMI_High}
    \setlength{\tabcolsep}{4mm}{
    \begin{tabular}{|c|c|c|c|c|c|c|c|c|c|c|}
    \hline
    \multirow{2}{*}{method}&
    \multicolumn{2}{c|}{Event}&\multicolumn{2}{c|}{Scene13}&\multicolumn{2}{c|}{Caltech 101}&\multicolumn{2}{c|}{SImagenet}&\multicolumn{2}{c|}{Caltech 256}\cr\cline{2-11}
    &mean&std&mean&std&mean&std&mean&std&mean&std\cr\hline
    EulerK      &38.31 &0.02 &56.85 &0.02 &48.52 &0.00 &50.81  &0.01 &33.18 &0.00  \cr\hline
    REK1        &37.31 &0.05 &54.53 &0.05 &47.40 &0.00 &47.48  &0.01 &31.35 &0.00  \cr\hline
    REK2        &36.90 &0.03 &54.12 &0.04 &47.48 &0.00 &51.43  &0.01 &32.63 &0.00  \cr\hline
    \end{tabular}}
\end{table*}

\begin{table*}
\footnotesize
\centering
\caption{ACC (\%) of Different Methods on high-dimensional datasets}
    \label{tab:ACC_High}
    \setlength{\tabcolsep}{4mm}{
    \begin{tabular}{|c|c|c|c|c|c|c|c|c|c|c|}
    \hline
    \multirow{2}{*}{method}&
    \multicolumn{2}{c|}{Event}&\multicolumn{2}{c|}{Scene13}&\multicolumn{2}{c|}{Caltech 101}&\multicolumn{2}{c|}{SImagenet}&\multicolumn{2}{c|}{Caltech 256}\cr\cline{2-11}
    &mean&std&mean&std&mean&std&mean&std&mean&std\cr\hline
    EulerK      &52.09 &0.24 &54.88 &0.12 &27.50 &0.03 &54.36  &0.07 &10.81 &0.00  \cr\hline
    REK1        &50.75 &0.28 &53.48 &0.15 &26.43 &0.02 &50.87  &0.02 &10.94 &0.00  \cr\hline
    REK2        &50.42 &0.17 &53.63 &0.20 &26.56 &0.03 &55.71  &0.08 &10.92 &0.00  \cr\hline
    \end{tabular}}
\end{table*}

\begin{figure}
 \centering
 \includegraphics[width=7.5cm]{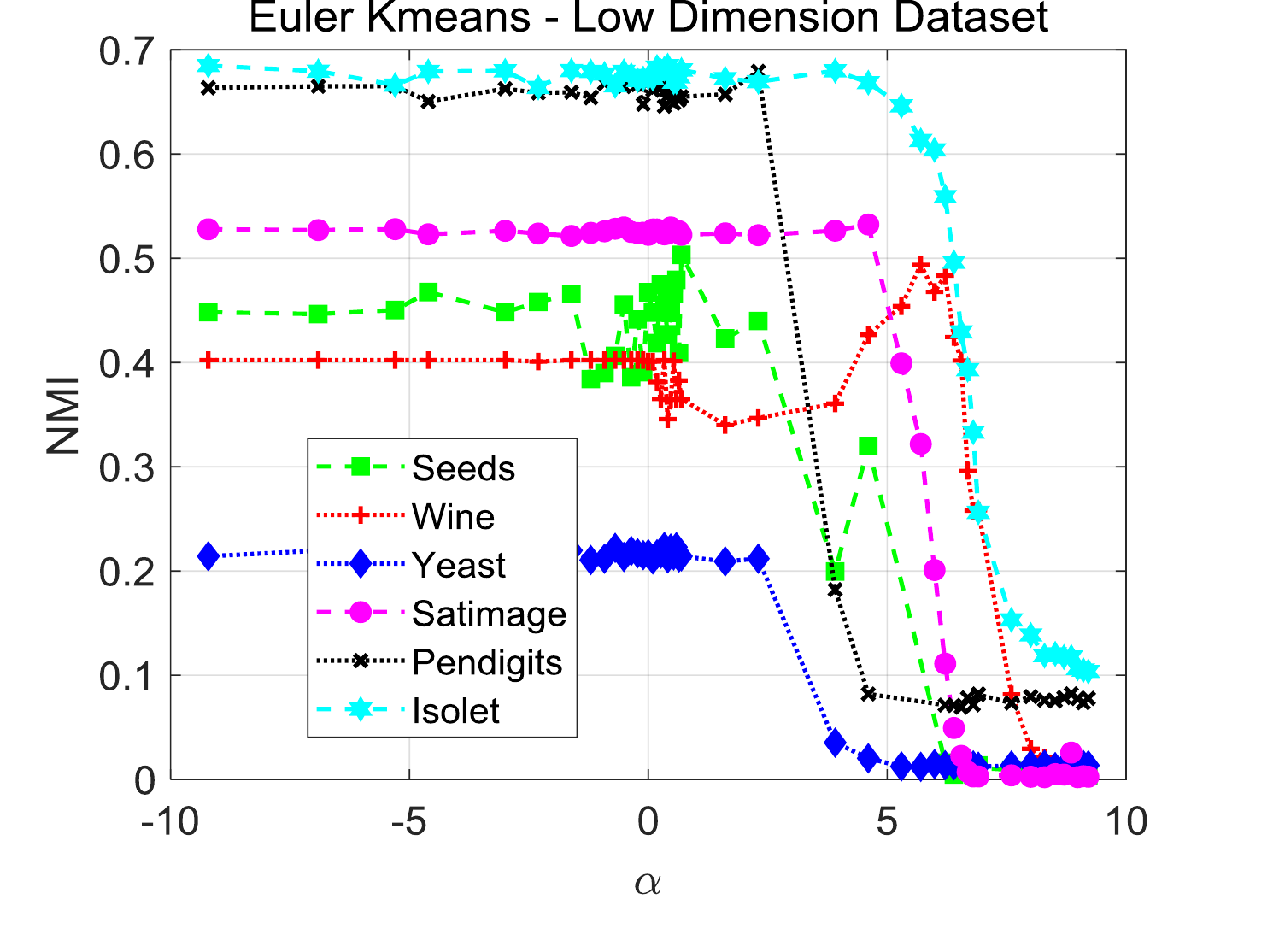}
 \caption{Clustering performance of EulerK on the six low-dimensional datasets with respect to $\alpha$.}
 \label{alphaEKLow}
 \end{figure}

 \begin{figure}
 \centering
 \includegraphics[width=7.5cm]{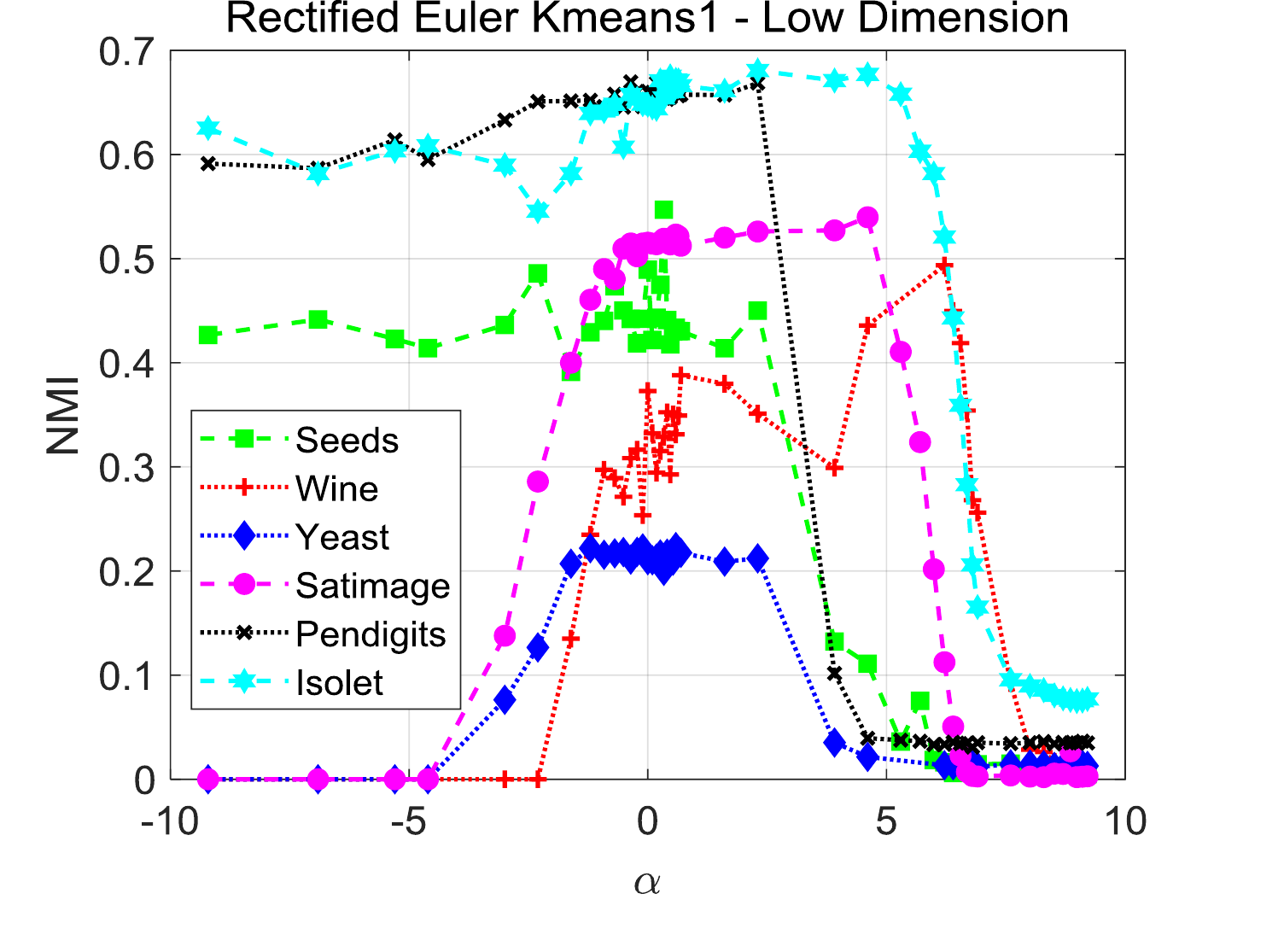}
 \caption{Clustering performance of REK1 on the six low-dimensional datasets with respect to $\alpha$.}
 \label{alphaREK1Low}
 \end{figure}
 \begin{figure}
 \centering
 \includegraphics[width=7.5cm]{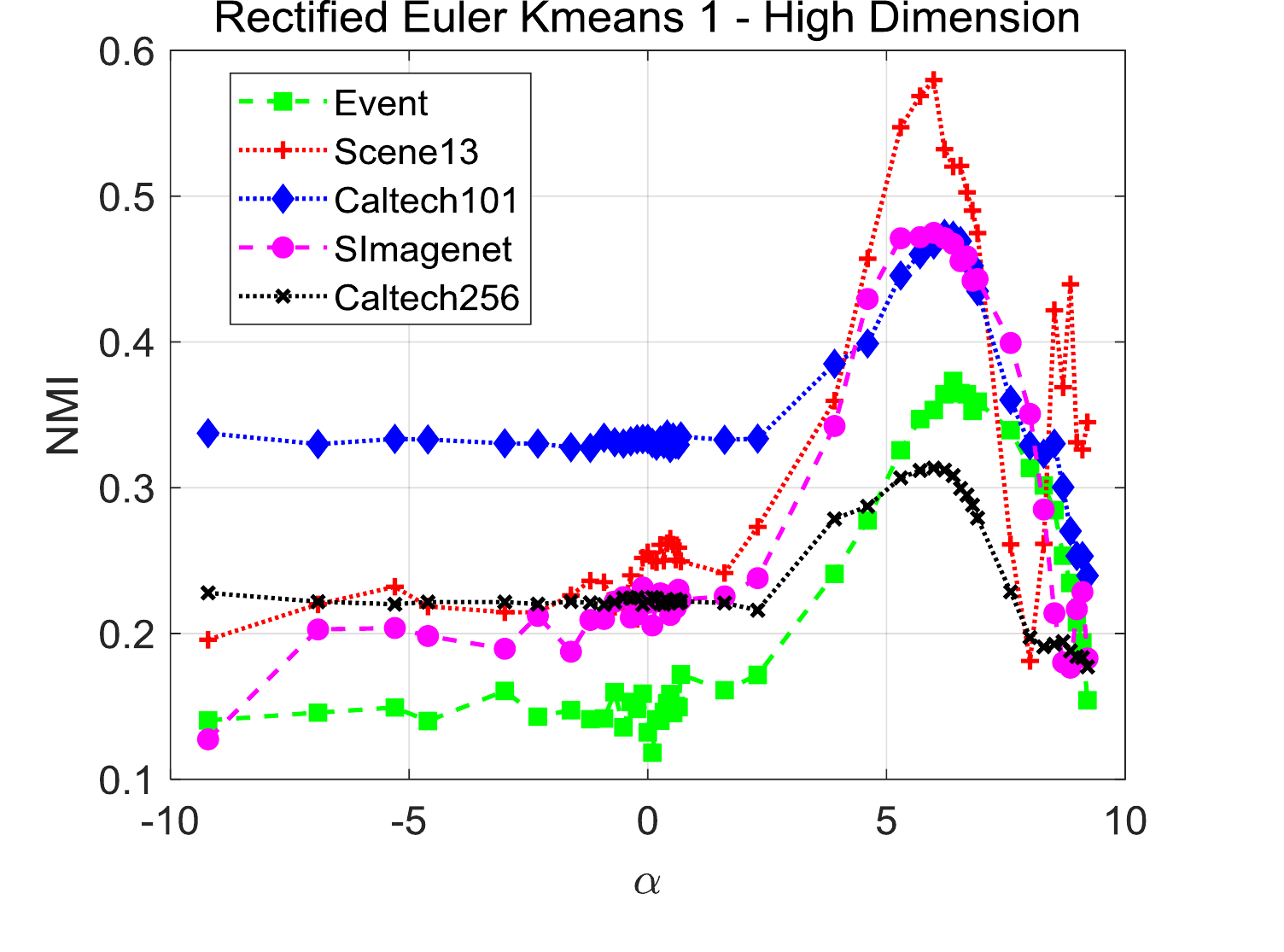}
 \caption{Clustering performance of REK1 on the five high-dimensional datasets with respect to $\alpha$.}
 \label{alphaREK1High}
 \end{figure}

 \begin{figure}
 \centering
 \includegraphics[width=7.5cm]{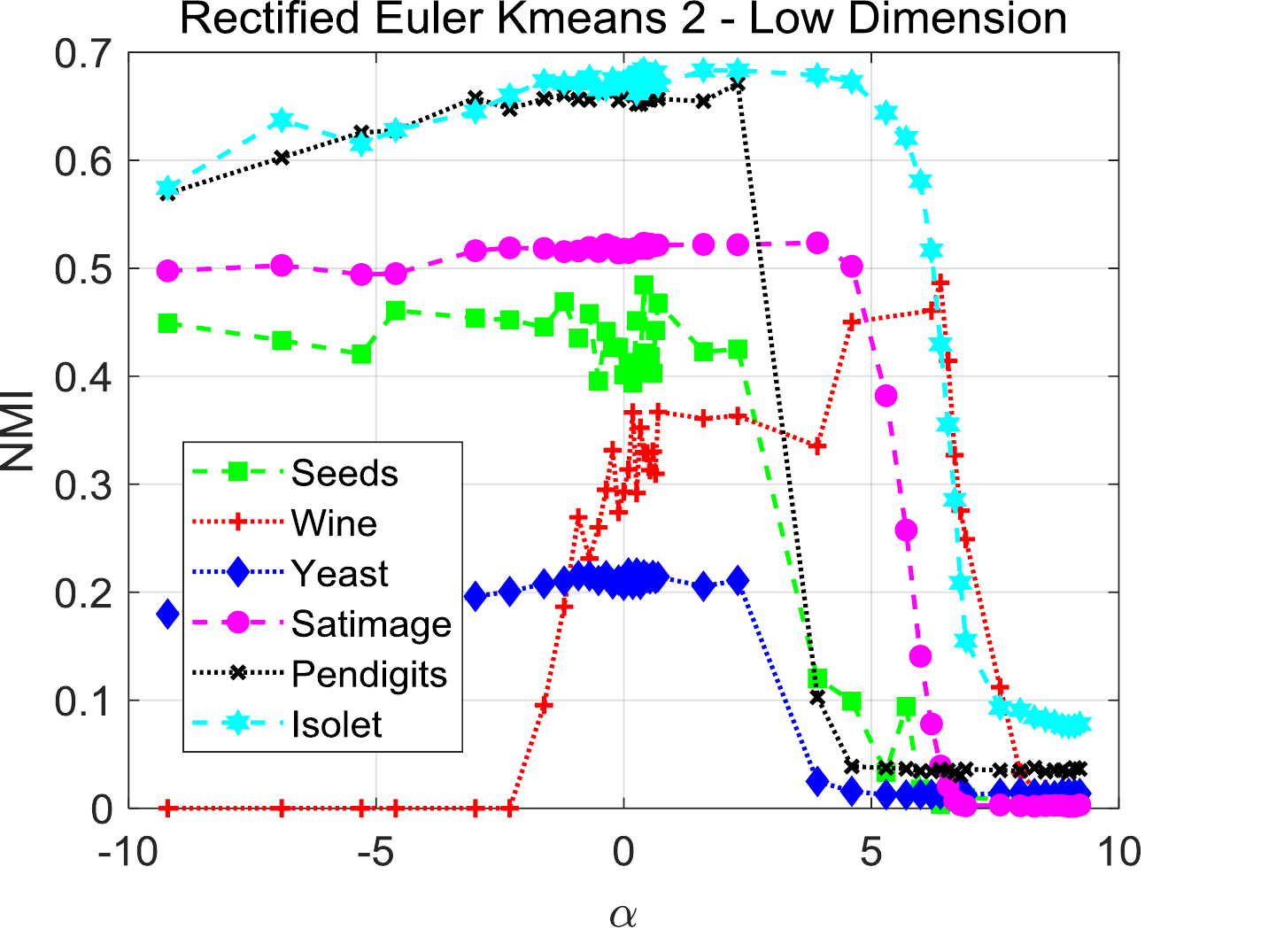}
 \caption{Clustering performance of REK2 on six low-dimensional datasets with different values of $\alpha$.}
 \label{alphaREK2Low}
 \end{figure}

 \begin{figure}
 \centering
 \includegraphics[width=7.5cm]{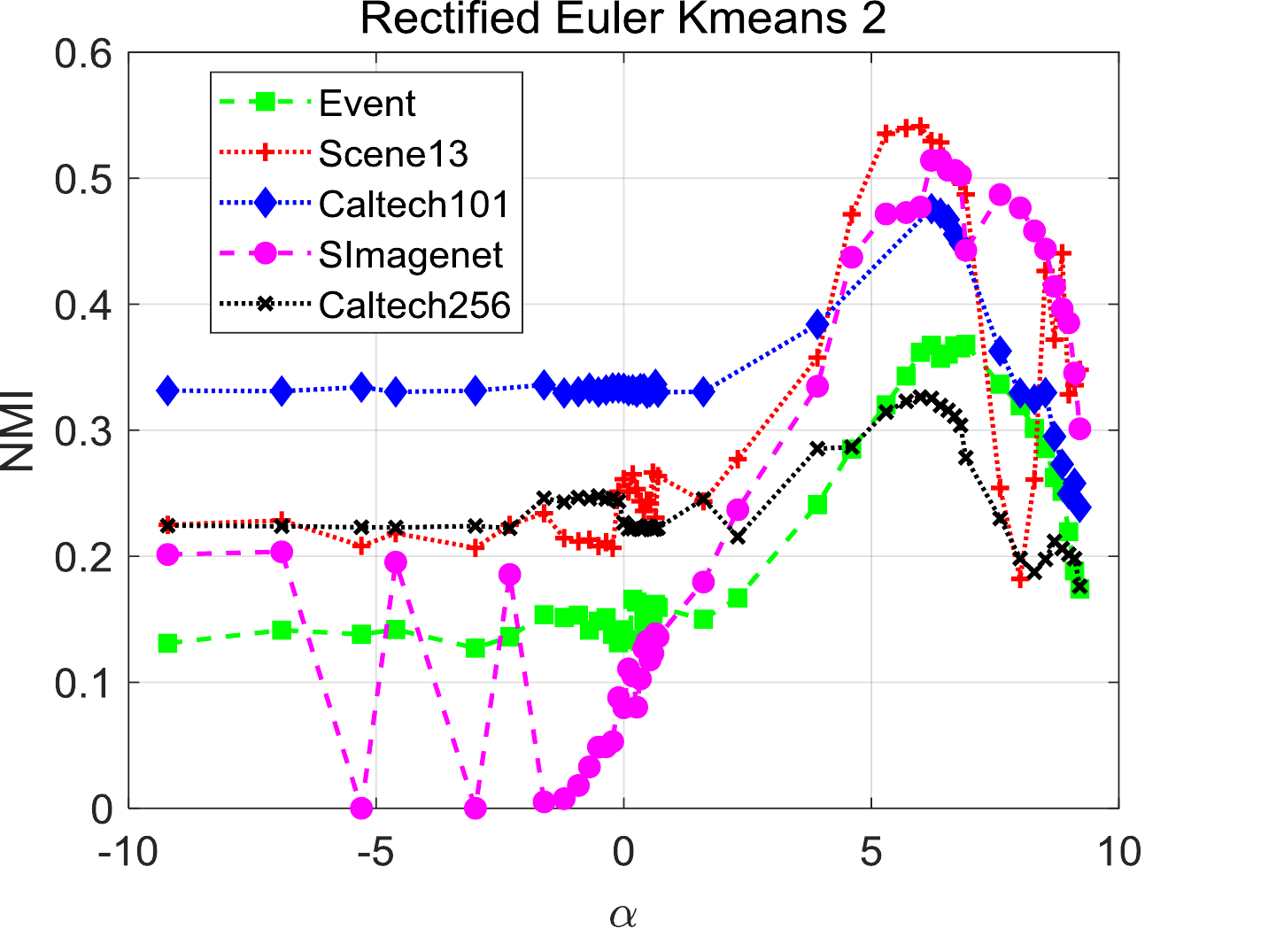}
 \caption{Clustering performance of REK2 on five high-dimensional datasets with different values of $\alpha$.}
 \label{alphaREK2High}
 \end{figure}
\subsubsection{Performance Evaluation}
Table \ref{tab:NMI_Low} and \ref{tab:ACC_Low} list the average ACC and NMI on the six low-dimensional datasets acquired by EulerK, REK1 and REK2. As Table \ref{tab:NMI_Low} and \ref{tab:ACC_Low} show, on seeds, REK1 performs better than EulerK with performance improvement of 5.32\% over NMI and 4.77\% over ACC, and REK2 performs better than EulerK with performance improvement of 1.86\% over NMI and 1.24\% over ACC. REK1 outpforms EulerK with 1.29\% over NMI while REK2 performs comparably with EulerK on wine. Moreover, both REK1 and REK2 perform comparably with EulerK on pendigits, satimage and isolet. Note similar with EulerK, we use only the groundtruth $K$ centroids to reflect the distribution of mapped data. Actually, it is unfair for our nonlinear models REK1 and REK2 to utilize only $K$ centroids to represent such manifold-like data points. That is, we should choose more centroids to represent the distribution of manifold-like dataset. Although so, our REK1 and REK2 still outperform EulerK on the low-dimensional datasets in summary. This further enhance the rationality of questioning the werid phenomenon occurring in EulerK and validate the effectiveness of both REK1 and REK2.

\par{Table \ref{tab:NMI_High} and \ref{tab:ACC_High} list the average ACC and NMI on the five high-dimensional datasets acquired by the compared approaches. For high-dimensional datasets with more than 10,000 features, as Table \ref{tab:NMI_High} and \ref{tab:ACC_High} show, both REK1 and REK2 achieve good performance comparable to EulerK, this illustrates the effectiveness of our proposed REK1 and REK2. On the one hand, from Table \ref{DeviationDegreeRealData}, we can see that EulerK obtains the centroids which approximately reside on the mapped space on these high-dimensional datasets , this in fact contributes to "curse of dimension" instead of the EulerK algorithm itself. That is EulerK actually obtains the centroids belonging to the support domain of mapped data on such very high-dimensional datasets, therefore our REK1 and REK2 perform comparably to EulerK on these high-dimensional dataset, further validating the rationality of REK1 and REK2. On the other hand, although EulerK performs comparably to REK1 and REK2, we choose the number of clusters as \cite{EulerKmeans2}, such a selection for the number of clusters is actually unfair for our methods. Therefore, it is necessary to determine more centroids to represent the distribution of manifold-like data points and design a new criterion to evaluate clustering performance on such categorical datasets.}


 \begin{figure}
\centering
\subfigure[The objective function value of REK1 versus iteration on Scene13 dataset.]{%
\includegraphics[width=5.55cm]{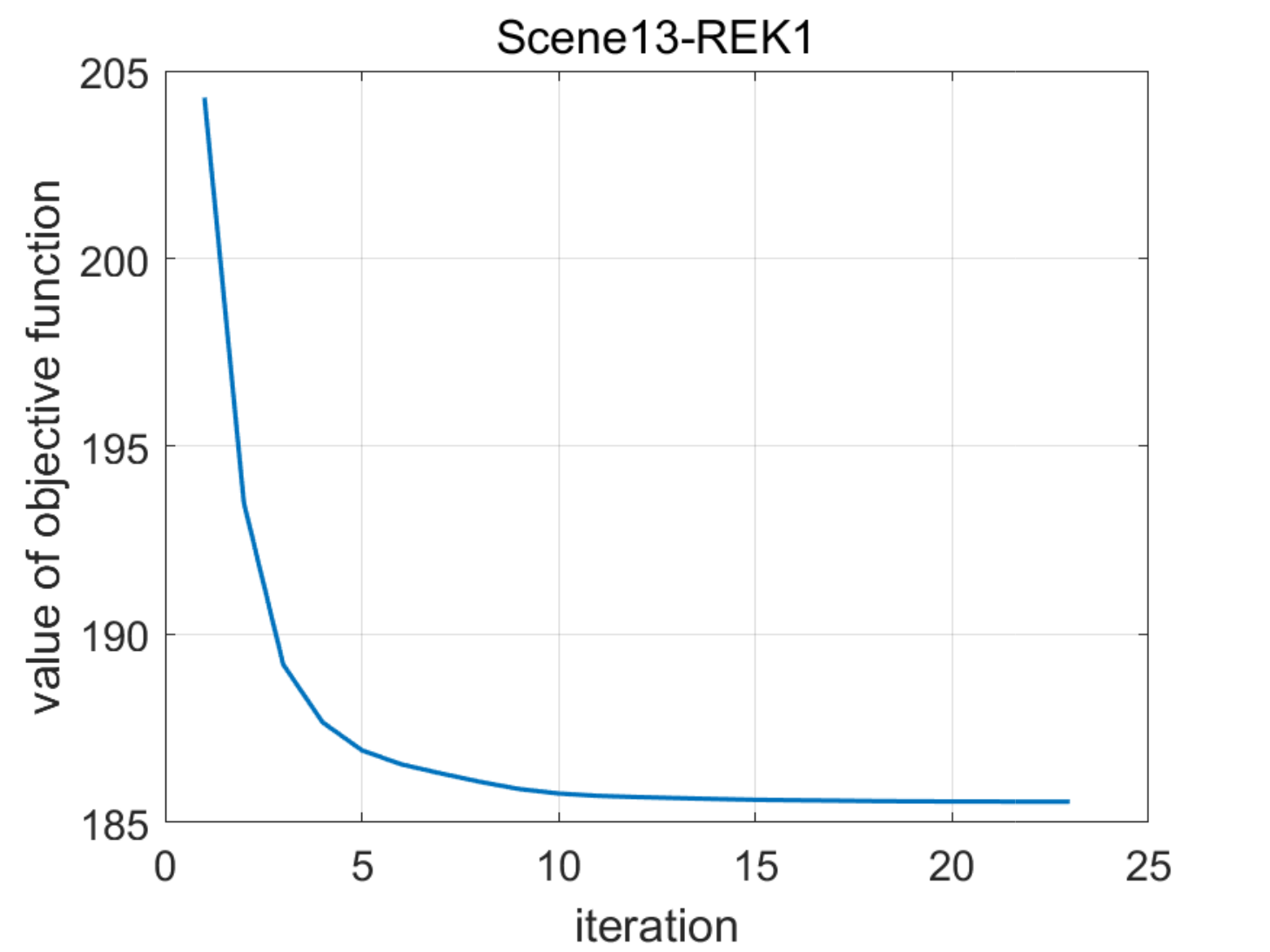}
\label{ConvergenceREK1_Scene13}}
\quad
\subfigure[The objective function value of REK2 versus iteration on Scene13 dataset.]{%
\includegraphics[width=5.55cm]{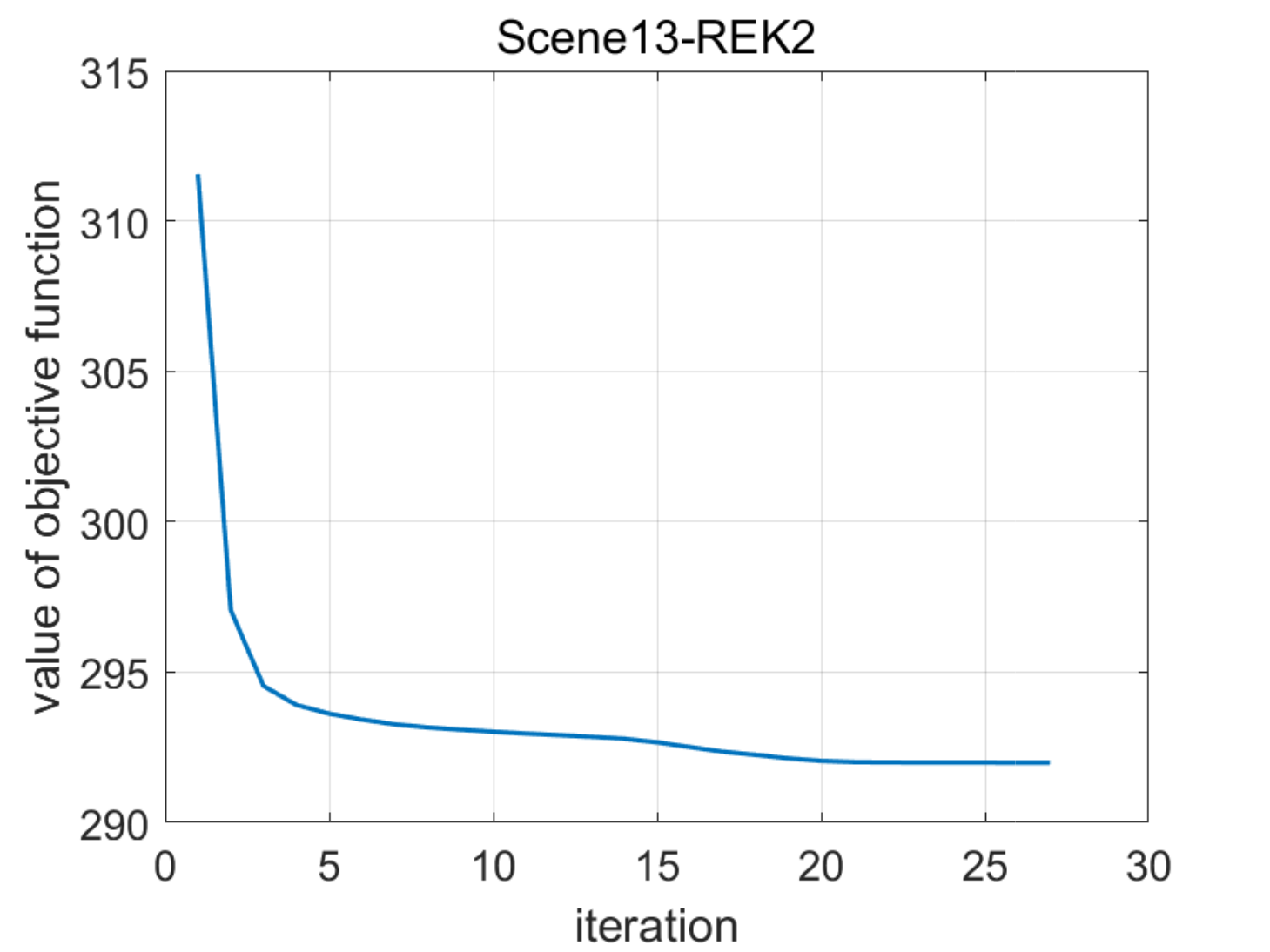}
\label{ConvergenceREK2_Scene13}}
\caption{The objective function value of REK1 and REK2 versus iteration Scene13 dataset.}
\label{ConvergenceScene13}
\end{figure}

\begin{figure}
\centering
\subfigure[The objective function value of REK1 versus iteration on Caltech256 dataset.]{%
\includegraphics[width=5.55cm]{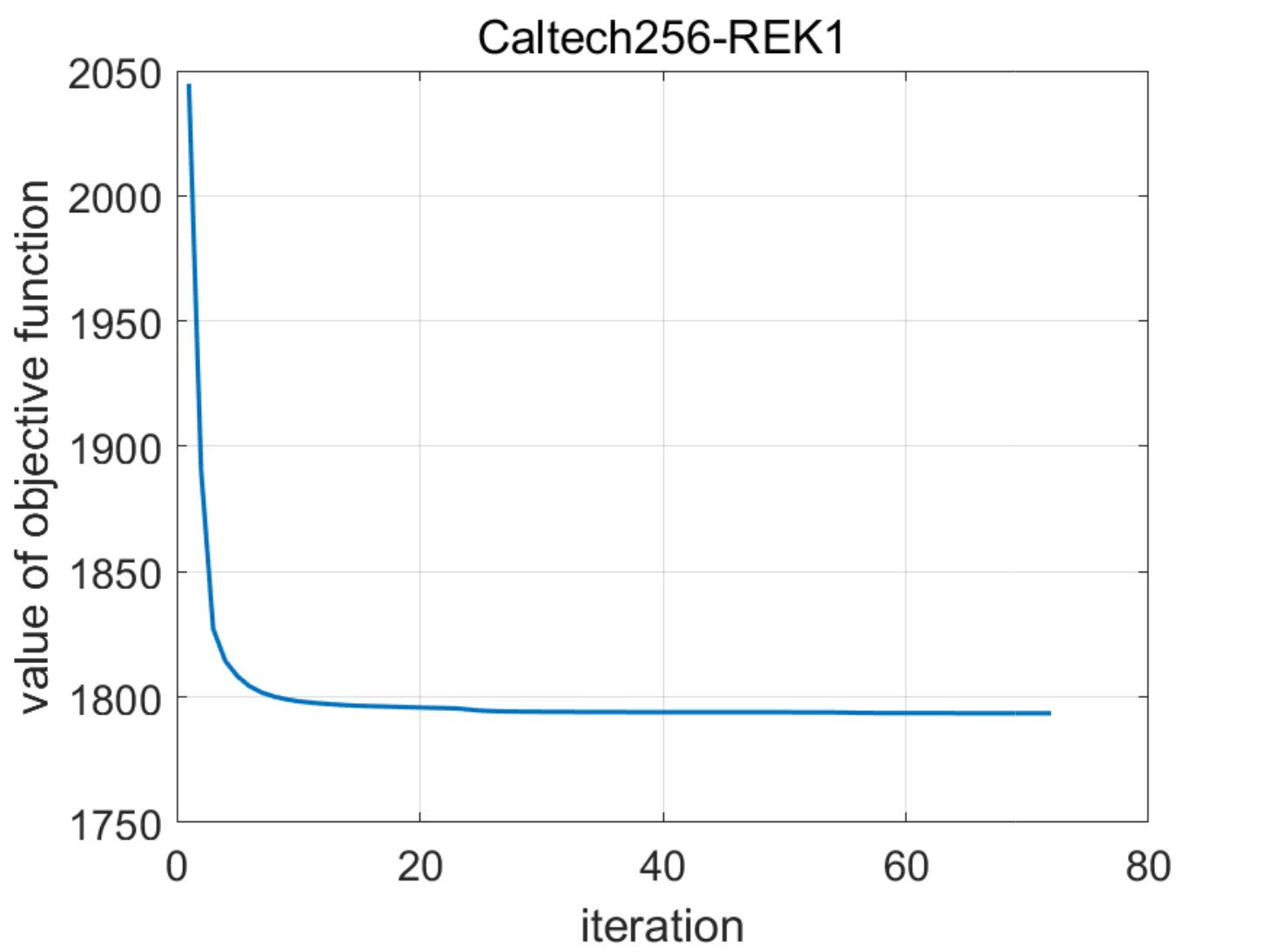}
\label{ConvergenceREK1_Caltech256}}
\quad
\subfigure[The objective function value of REK2 versus iteration on Caltech256 dataset.]{%
\includegraphics[width=5.55cm]{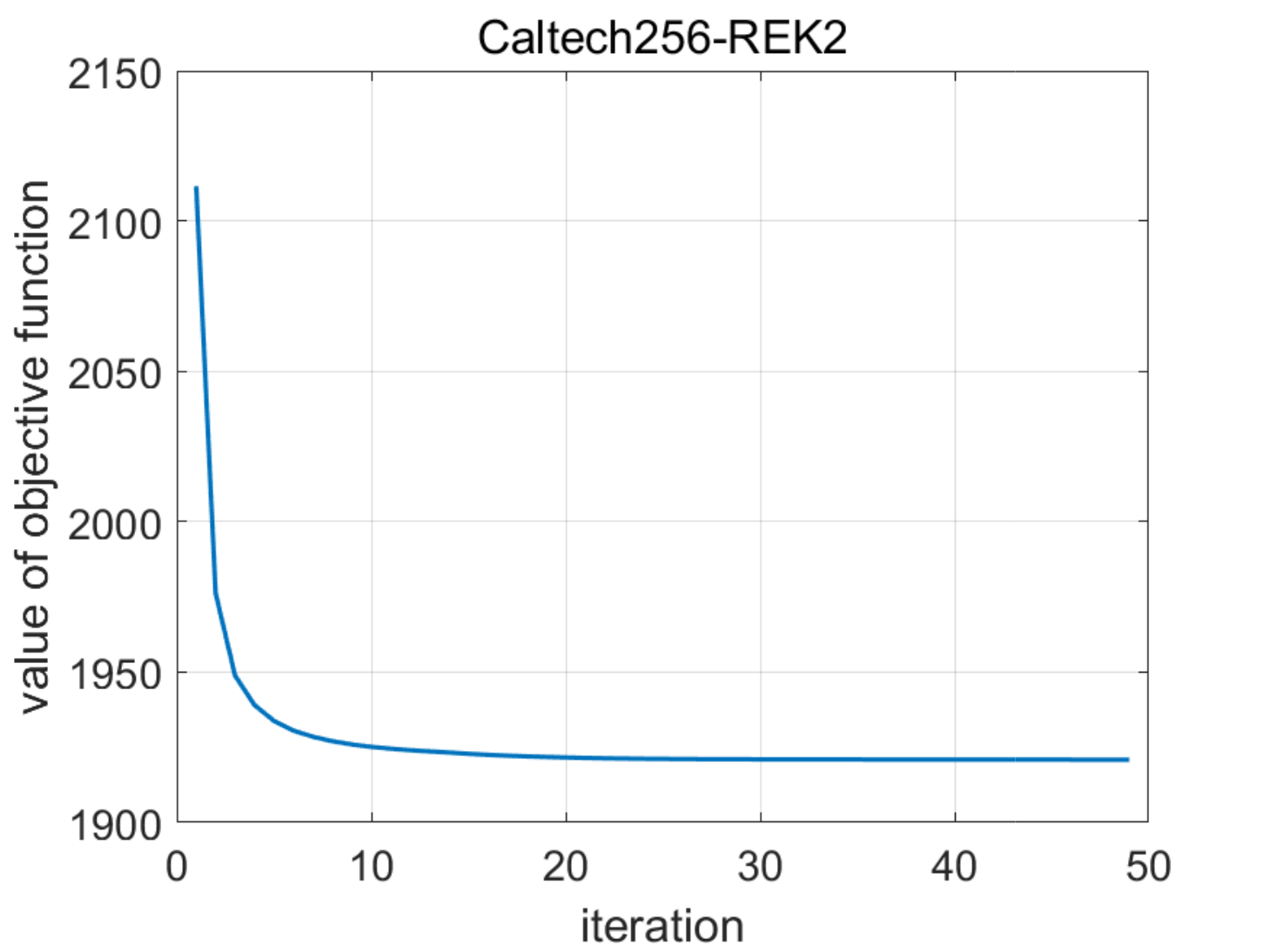}
\label{ConvergenceREK2_Caltech256}}
\caption{The objective function value of REK1 and REK2 versus iteration Caltech256 dataset.}
\label{ConvergenceCaltech256}
\end{figure}

\subsection{Parameters}
In this section, we rectify the declaration about the effect of $\alpha$ on the performance of EulerK in \cite{EulerKmeans1}\cite{EulerKmeans2} and analyse the performance of the proposed REK1 and REK2 against different $\alpha$ in Euler kernel.
\par{Figure \ref{alphaEKLow} shows average NMI obtained by EulerK on the six low-dimensional datasets with respect to $\alpha$. Compared this figure with Figure 8 about five high-dimensional datasets in \cite{EulerKmeans1}\cite{EulerKmeans2}, we can find that the influence of $\alpha$ on the performance of EulerK is closely related to the dimension of data point. From Figure \ref{alphaEKLow}, we can see that EulerK performs well on the small values of $\alpha$ for the low-dimensional datasets. Different from the low-dimensional datasets, EulerK performs much better on the large values of $\alpha$ over the high-dimensional datasets. In summary, the insensitive range to $\alpha$ of EulerK over low-dimensional datasets is exactly opposite of that over high-dimensional datasets.}
\par{Figure \ref{alphaREK1Low} and \ref{alphaREK1High} illustrate the clustering performance of REK1 on low-dimensional and high-dimensional datasets with respect to the value of $\alpha$, respectively. From the two figures, we can find that $\alpha$ also has different from effect on the performance of REK1 on the different dimensional datasets. Similar with EulerK, small $\alpha$ brings large performance improvement for REK1 on low-dimensional datasets whereas large $\alpha$ brings large performance improvement for REK1 on the high-dimensional datasets. Figure \ref{alphaREK2Low} and \ref{alphaREK2High} show the clustering performance of REK2 on low-dimensional and high-dimensional datasets over the value of $\alpha$, respectively. From them, we can see that the similar declaration about $\alpha$ on REK2 with REK1 is also acquired.}

\subsection{Convergence Analysis}
In this section, we plot the objective function values of (\ref{REK1}) and (\ref{REK2}) with respect to number of iterations on Scene13 and Caltech256 in Figure \ref{ConvergenceScene13} and \ref{ConvergenceCaltech256}, respectively.  From the two figures, we can observe that the two objective functions monotonously and fast declines and converge after several iterations, illustrating the fast convergence of both REK1 and REK2.

\section{Conclusion}
In this paper, we focus on a simple but elegant kernel clustering method, i.e., Euler $k$-means (EulerK). Although EulerK is simple, feasible to large-scale clustering problem, robust to noise and outliers, it acquires centroids deviating from the mapped feature space, which in strict distributional sense, actually are outliers. This weird phenomenon also occurs in some popular generic kernel clustering methods but is seldom concerned so far. Motivated by this phenomenon, in this paper, propose two rectified methods, i.e. Rectified Euler Kmeans 1 (REK1) which rectify EulerK by adding certain constraint on these centroids in the complex space and Rectified Euler Kmeans 2 (REK2) which treats each centroid as a mapped image of a data point or pre-image in the original space and optimizes these pre-images in Euler kernel induced metric space. The two proposed methods not only inherit the merits of EulerK but also acquire the true centroids which really reside on the mapped space to better characterize the structure of mapped data. Moreover, these two methods can be  methodologically extended to straightforwardly deal with the problems of such a category in generic kernel clustering approaches. Finally, the experimental results on both synthetic and commonly used real datasets validate the rationality and effectiveness of our proposed REK1 and REK2. In the future, we will try to propose a new specific measure to evaluate the clustering performance of kernel methods with such phenomenon.

\section{Acknowledgments}
This work is supported by the Key Program of National Natural and Science Foundation of China (NSFC) under Grant No. 61732006.

\ifCLASSOPTIONcompsoc

%
%
%

\ifCLASSOPTIONcaptionsoff
  \newpage
\fi

\begin{IEEEbiography}[{\includegraphics[width=1in,height=1.25in,clip,keepaspectratio]{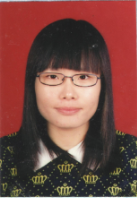}}]{Yunxia Lin}
received the B.S. degree in electronic information engineering from Qilu University of Technology Jinan, China, in 2013. In 2017, she completed her M.S. degree in communication and information system at Lanzhou University, Lanzhou, China. Currently, she is pursuing the Ph.D degree with the College of Computer Science \& Technology, Nanjing University of Aeronautics and Astronautics, Nanjing, China. Her research interests include pattern recognition and machine learning.
\end{IEEEbiography}
\begin{IEEEbiography}[{\includegraphics[width=1in,height=1.25in,clip,keepaspectratio]{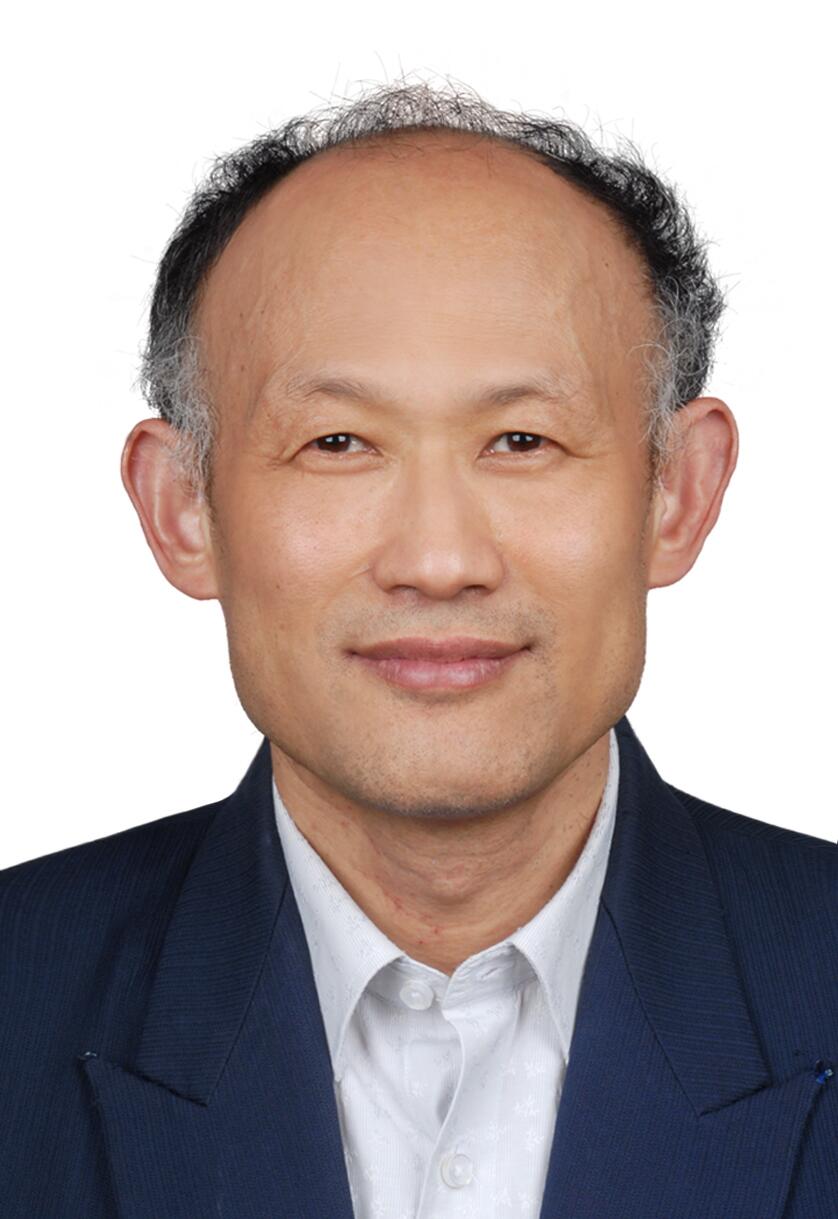}}]{Songcan Chen}
received his B.S. degree in mathematics from Hangzhou University (now merged into Zhejiang University) in 1983. In 1985, he completed his M.S. degree in computer applications at Shanghai Jiaotong University and then worked at Nanjing University of Aeronautics and Astronautics (NUAA) in January 1986, where he received a Ph.D. degree in communication and information systems in 1997.
\par{Since 1998, as a full-time professor, he has been with the College of Computer Science \& Technology at NUAA. His research interests include pattern recognition, machine learning and neural computing. He has published over 100 top-tier journals, such as TPAMI, TKDE, TNNLS, TIP, IEEE Transactions on Information Forensics \& Security, IEEE Transactions on Systems, Man \& Cybernetics-Part B, IEEE Transactions on Wireless Communication and conference papers, such as ICML, CVPR, IJCAI, AAAI, ICDM and so on. He is also an IAPR Fellow.}
\end{IEEEbiography}

\end{document}